\documentclass[journal]{IEEEtran}
\usepackage{utils/utils}
\title{Fast and Distributed Equivariant Graph Neural Networks by Virtual Node Learning}

\author{Yuelin Zhang*$^{1\,2\,3}$, Jiacheng Cen*$^{1\,2\,3}$, Jiaqi Han$^4$, Wenbing Huang~\textsuperscript{\Letter}$^{1\,2\,3}$
\IEEEcompsocitemizethanks{
\IEEEcompsocthanksitem $*$ denotes equal contributions; \textsuperscript{\Letter} denotes the corresponding author: hwenbing@ruc.edu.cn.\\
\IEEEcompsocthanksitem $^1$ Gaoling School of Artificial Intelligence, Renmin University of China, Beijing 100872, China.\\
\IEEEcompsocthanksitem $^2$ Beijing Key Laboratory of Research on Large Models and Intelligent Governance, Beijing 100872, China.\\
\IEEEcompsocthanksitem $^3$ Engineering Research Center of Next-Generation Intelligent Search and Recommendation, MOE, Beijing 100872, China.\\
\IEEEcompsocthanksitem $^4$ Department of Computer Science, Stanford University, CA 94305, USA.\\
\IEEEcompsocthanksitem This work has been submitted to the IEEE for possible publication. Copyright may be transferred without notice, after which this version may no longer be accessible.
}}

\begin{document}
\maketitle
\begin{abstract}

Equivariant Graph Neural Networks (GNNs) have achieved remarkable success across diverse scientific applications. However, existing approaches face critical efficiency challenges when scaling to large geometric graphs and suffer significant performance degradation when the input graphs are sparsified for computational tractability.
To address these limitations, we introduce FastEGNN and DistEGNN, two novel enhancements to equivariant GNNs for large-scale geometric graphs. FastEGNN employs a key innovation: a small ordered set of virtual nodes that effectively approximates the large unordered graph of real nodes. Specifically, we implement distinct message passing and aggregation mechanisms for different virtual nodes to ensure mutual distinctiveness, and minimize Maximum Mean Discrepancy (MMD) between virtual and real coordinates to achieve global distributedness. This design enables FastEGNN to maintain high accuracy while efficiently processing large-scale sparse graphs.
For extremely large-scale geometric graphs, we present DistEGNN, a distributed extension where virtual nodes act as global bridges between subgraphs in different devices, maintaining consistency while dramatically reducing memory and computational overhead.
We comprehensively evaluate our models across four challenging domains: N-body systems (100 nodes), protein dynamics (800 nodes), Water-3D (8,000 nodes), and our new Fluid113K benchmark (113,000 nodes). Results demonstrate superior efficiency and performance, establishing new capabilities in large-scale equivariant graph learning. Code is available at \url{https://github.com/GLAD-RUC/DistEGNN}.
\end{abstract}

\begin{IEEEkeywords}
Graph Neural Networks, Equivariance, Geometric Graphs, Virtual Nodes, Efficiency.
\end{IEEEkeywords}

\section{Introduction}

\IEEEPARstart{V}{arious} scientific data, such as chemical molecules, proteins, and particle-based physical systems, are often represented as geometric graphs~\cite{bronstein2021geometric}. Unlike standard graph representations, geometric graphs additionally assign each node a set of geometric vectors---for example, 3D atomic coordinates in molecules. These graphs exhibit prominent symmetries, including translations, rotations, and reflections, stemming from the invariance of physical laws governing atomic (or particle) dynamics under changes in absolute position and orientation. When modeling such data, it is essential to incorporate these symmetries into the design, leading to the development of equivariant Graph Neural Networks (GNNs)~\cite{han2024survey,duval2023hitchhiker,zhang2023artificial}. In recent years, equivariant GNNs have achieved remarkable success in scientific applications, including physical dynamics simulation~\cite{wu2023equivariant,xu2024equivariant}, protein generation~\cite{watson2023novo,ingraham2023illuminating}, and many others.

Despite significant progress, existing equivariant GNNs face efficiency challenges when handling large geometric graphs, which are common in physical dynamics simulations. For example, EGNN~\cite{satorras2021en}, one of the most widely used models, incurs quadratic complexity in message exchange for fully connected graphs, making it impractical for large-scale scenarios like fluid dynamics simulations, where particle counts often exceed tens of thousands. A straightforward solution to improve efficiency is to reduce the number of edges, such as by restricting message passing to local neighbors within a cutoff radius. However, this approach risks losing long-range interactions and may degrade model performance.

\IEEEpubidadjcol

In this paper, we address the aforementioned challenge by introducing \emph{virtual nodes}. For clarity, we refer to the original nodes in geometric graphs as \emph{real nodes}. Each virtual node is designed to connect to all real nodes as well as other virtual nodes, enabling dense, global message passing. This preserves expressive power even when the underlying geometric graph is sparsified for efficiency. Drawing inspiration from the physics principle of equivalent particle interactions~\cite{darve2000fast}, we propose learning virtual nodes that serve as compact, representative proxies for groups of real nodes. To ensure effectiveness, these virtual nodes must satisfy two key geometric properties:
\emph{Mutual distinctiveness} encoding diverse structural features of the graph, and \emph{Global distributedness} maintaining spatial alignment with the real nodes' distribution.
These properties are interdependent and fundamental to the success of our method.

Specifically, we propose two efficient geometric GNN variants based on EGNN~\cite{satorras2021en}: FastEGNN and DistEGNN\footnote{This work is an extension of our previous conference version \cite{zhang2024improving}.}. In FastEGNN, we introduce an ordered set of virtual nodes, where each node assumes a distinct semantic role and undergoes independent message passing. This architecture explicitly enhances mutual distinctiveness among virtual nodes. Furthermore, we employ Maximum Mean Discrepancy (MMD)~\cite{borgwardt2006integrating} with an E(3)-invariant kernel to optimize the spatial alignment between virtual and real node coordinates, thereby improving global distributedness. Together, these innovations enable FastEGNN to efficiently simulate sparse graphs on a single device (\emph{e.g.}, GPU) without compromising performance.

To handle extremely large graphs (\emph{e.g.}, those exceeding 100K nodes) that surpass single-device capacity, we propose DistEGNN, a distributed extension of FastEGNN designed for multi-device parallel execution. Our approach employs a graph partitioning strategy to decompose the large graph into smaller subgraphs, which are distributed across multiple computing devices for parallel processing. While this partitioning improves scalability, it inherently introduces information loss across subgraph boundaries. To mitigate this, we implement a shared set of virtual nodes that span all devices, with their states continuously synchronized during training. This design serves two critical purposes: 1. Enabling global context propagation throughout the entire graph; 2. Facilitating cross-subgraph information fusion. 
By maintaining these global connections, DistEGNN effectively preserves the structural integrity of the complete graph while achieving efficient distributed computation, thereby minimizing the performance degradation typically associated with partitioned graph processing.

To sum up, our main contributions are as follows.
\begin{itemize}
    \item We propose FastEGNN upon EGNN~\cite{satorras2021en}, which introduces learnable virtual nodes to enable efficient global message passing on sparse large geometric graphs. By explicitly optimizing virtual nodes for mutual distinctiveness and global distributedness, our method preserves expressive power while maintaining computational efficiency. Moreover, we consider virtual node learning as a general plug-in and extend it to more backbones, including RF, SchNet, and TFN.  
    \item We further propose DistEGNN, a distributed extension of FastEGNN that breaks the memory barrier of single-device processing through innovative graph partitioning and synchronized virtual nodes learning. 
    Our approach maintains model accuracy while enabling distributed processing of large-scale geometric graphs. 
    \item We conduct comprehensive experiments to evaluate our proposed methods against State-Of-The-Art (SOTA) baselines across four challenging domains: N-body systems, Protein Dynamics, Water-3D, and our newly introduced Fluid113K dataset. To the best of our knowledge, this is the first work to perform equivariant graph learning on geometric graphs with over 100K nodes, significantly advancing the scalability frontier in this field. The results demonstrate both the efficiency and effectiveness of our models on such large-scale geometric graphs.
\end{itemize}

\section{Related Work}

\textbf{Geometric GNNs.}
Geometric GNNs can be classified into invariant and equivariant models. Invariant GNNs, exemplified by SchNet~\citep{schutt2018schnet} and DimeNet~\citep{Klicpera2020Directional}, have made initial attempts to embed geometric information into invariance features like distance and angles. However, a notable gap still exists in achieving full equivariance within these models, leading to the study of equivariant GNNs. For example, high-degree steerable GNNs, including TFN~\citep{thomas2018tensor}, SEGNN~\citep{brandstetter2022geometric}, and $\mathrm{SE}(3)$-Transformer~\citep{fuchs2020se}, leverage the equivariance of spherical harmonics, offering commendable physical interpretability. Nevertheless, the substantial computational cost associated with spherical harmonic forms poses a significant limitation for their application in large-scale tasks.
In addition to the aforementioned models, scalarization-based GNNs~\citep{satorras2021en,kohler2019equivariant,jing2021learning,cen2024high} present a more elegant and expedient approach to representing geometric information, relying solely on linear combinations of geometric vectors.
Despite the rich literature and sophisticated architectures, the effectiveness of geometric GNNs usually relies on densely connected geometric graphs to ensure accuracy. However, this necessity introduces high complexity, thereby constraining the practical implementation of geometric GNNs in large-scale scientific problems. Our work tackles these challenges, aiming to enhance the scalability and applicability of geometric GNNs in scientific domains.

\textbf{Virtual nodes for GNNs.}
In prior investigations on conventional GNNs~\citep{gilmer2017neural,pham2017graph,hwang2022analysis}, attempts were made to maintain graph connectivity by introducing virtual nodes connected to all graph nodes. This design ensures that any two nodes can exchange information through the process of reading from and writing into virtual nodes at each step of message passing. In the realm of geometric GNNs, certain existing approaches treat virtual nodes as specifically defined clusters based on prior knowledge. They are exclusively connected to nodes belonging to the same cluster, such as different objects in SGNN~\citep{han2022learning} or distinct protein chains in MEAN~\citep{kong2023conditional}. In contrast to these methods, our approach involves end-to-end learning of virtual nodes from the dataset, eliminating the reliance on prior or external knowledge. Notably, the learning process for virtual nodes is thoughtfully designed to ensure both mutual distinctiveness and global distributedness.

\textbf{Scalability of GNNs.} 
In the field of conventional GNNs, extensive efforts have been made to address the computational and memory bottlenecks of processing large-scale graphs. Early approaches primarily focused on designing sampling strategies, which reduce the overhead by performing message passing on a subset of sampled neighbors. These approaches are typically include node-level, layer-level, and graph-level sampling~\cite{ma2022graph,ding2024scalable}. Recently, DistGNN~\cite{md2021distgnn} proposed a distributed training framework for 2D graphs. To reduce cross-device communication costs, it integrates many complex algorithms, such as minimal vertex partitioning, shared memory, and delayed updates. However, this also increases the difficulty of implementation and deployment.
Different from above methods, our DistEGNN is the first distributed framework specifically designed for geometric graphs with 3D spatial structures. We carefully design the distributed message passing formula to preserve  E$(3)$-equivariance. Furthermore, we leverage a set of shared, ordered virtual nodes to effectively propagate global information. This significantly speeds up the training process while maintaining remarkable predictive accuracy.

\section{Preliminaries}
\begin{figure*}[!t]
    \centering
    \input{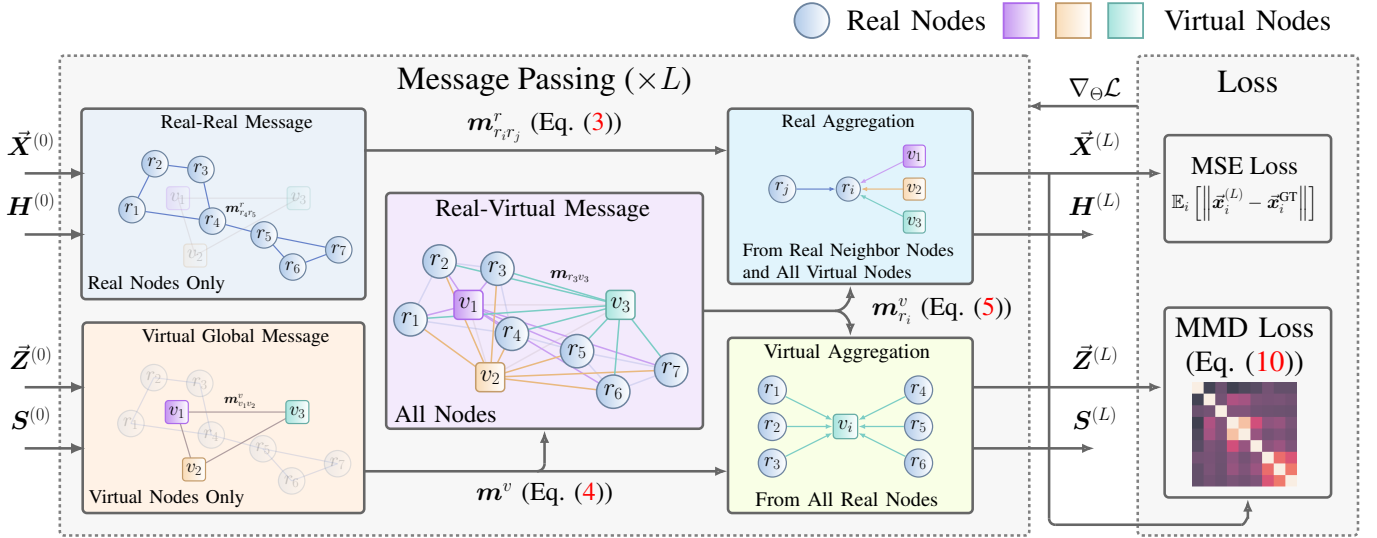}
    \vspace{-18pt}
    \caption{The overall architecture of FastEGNN. $(\gmX,\mH)$ are the real coordinates and features; $(\gmZ,\mS)$ are the virtual coordinates and features. Each layer contains 5 components: Real-Real Message $\vm^r_{r_ir_j}$, Virtual Global Message $\vm^v$, Real-Virtual Message $\vm^v_{r_i}$, Real Aggregation, and Virtual Aggregation. The real-real and real-virtual edges are displayed with different colors to indicate that the message passing and aggregation functions over the edges are different.}
    \label{fig:model}
\end{figure*}

\textbf{Geometric graph.} A geometric graph of $N$ nodes is defined as $\vec{\gG}\coloneqq(\gmX,\mH;\gE)$, where $\gmX=[\gvx_1,\gvx_2,\dots,\gvx_N]^\top\in\sR^{N\times 3}$ is the collection of 3D coordinates for the nodes, $\mH=[\vh_1,\vh_2,\dots,\vh_N]^\top\in\sR^{N\times H}$ is the node feature matrix, and $\gE$ is the set of edges representing the interactions between nodes. In the physical dynamics simulation scenarios, we are additionally provided with the velocities $\gmV=[\gvv_1,\gvv_2,\dots,\gvv_N]^\top\in\sR^{N\times 3}$ and optionally the edge feature $\ve_{ij}\in\sR^E$ for every edge $(i,j)\in\gE$.

\textbf{Equivariance.} Formally, a function $\phi$ is equivariant \emph{w.r.t.} group $G$ if $\phi(\rho_\gX(g) \vx)=\rho_\gY(g) \phi(\vx), \forall g\in G$, where $\rho_\gX(g)$ and $\rho_\gY(g)$
are the representation of transformation $g$ in the input space $\gX$ and output space $\gY$, respectively. In this work, we focus on the 3D Euclidean group E$(3)$, whose elements $g\in\text{E}(3)$ can be realized by an orthogonal matrix $\mO\in\sR^{3\time 3},\mO^\top\mO=\mI$ and a translation vector $\gvt\in\sR^3$. By this means, every E$(3)$-equivariant GNN  $\phi$ on geometric graph $\vec{\gG}$ taking as input coordinates $\gmX$, velocities $\gmV$, and node features $\mH$ should satisfy the following constraint for all $(\mO,\gvt\,)\in\text{E}(3)$:
\begin{align}
\label{eq:equivariance}
\gmX'\mO+\gvt,\mH'=\phi(\gmX\mO+\gvt,\gmV\mO,\mH,\gE),
\end{align}
where $\gmX',\mH'=\phi(\gmX,\gmV,\mH,\gE)$ are the updated coordinates and node features. Note that the velocities $\gmV$ are not affected by the translation $\gvt$. We do not specify permutation equivariance in \cref{eq:equivariance}, since it has been intrinsically encoded in GNNs.

\textbf{Problem formulation.} We mainly focus on the position prediction task which is fundamental to many physical simulation scenarios~\cite{fuchs2020se,satorras2021en}. Specifically, given the initial geometric graph with coordinates $\gmX$, node features $\mH$, velocities $\gmV$, edges $\gE$ and their attributes $\{\ve_{ij}:(i,j)\in\gE\}$, we seek to predict $\gmX'$ as the positions after a fixed time $\Delta t$. It is an E$(3)$-equivariant task, since if we rotate or translate the initial positions, the predicted positions should rotate or translate in accordance. That is, we aim to design an E$(3)$-equivariant function $\phi$ that meet the constraint of~\cref{eq:equivariance}.

\section{FastEGNN}
In this section, we introduce the details of our proposed FastEGNN. We first present the representation of virtual nodes in~\cref{sec:vr}. Then, we design the message passing and aggregation among the real and virtual nodes in~\cref{sec:message}, and provide the learning objectives of our model in~\cref{sec:loss}. Finally, we will explain the benefit of involving virtual nodes in large geometric graph learning in~\cref{sec:lgg}.

As claimed in Introduction, FastEGNN is designed to ensure the mutual distinctiveness and the global distributedness of the virtual nodes. We will specify that the mutual distinctiveness is guaranteed by the independent parameterization of the virtual nodes in~\cref{sec:vr}, and separate message passing and aggregation with different functions in~\cref{sec:message}. As for the global distributedness, we achieve this property by employing the MMD loss as a regularization term in~\cref{sec:loss}.

\subsection{Geometric graphs with virtual nodes}
\label{sec:vr}

With the input geometric graph $\vec{\gG}$ as defined before, we create an augmented geometric graph $\vec{\gG}^v$ by adding a set of $C$ virtual nodes  $(\gmZ,\mS)$, where $\gmZ\in\sR^{3\times C}$ and $\mS\in\sR^{H\times C}$ refer to the 3D coordinates and invariant features, respectively. Each virtual node is connected to all real nodes in $\vec{\gG}$, as well as each other virtual node. This augments the original edge set $\gE$ with new connections, leading to $\gE^v$. Overall, $\vec{\gG}^v\coloneqq(\gmX,\gmV,\mH;\gmZ,\mS; \gE^v)$.

As the virtual nodes are not observed in the input data, we need to handcraft the initialization of their values. We adopt the following initialization strategy for the virtual nodes: $\gmZ=\frac{1}{N}\sum_{i=1}^N\gvx_i \bm{1}^\top$, where each channel in $\gmZ$ is initialized as the Center-of-Mass (CoM) of the input geometric graph, ensuring E$(3)$-equivariance; each channel of $\mS\in\sR^{H\times C}$ is initialized as independent learnable parameters that are optimized during training. 
In form, the virtual nodes are initialized as a function of the real nodes, namely, $\gmZ, \mS=\varphi_{\mathrm{init}}(\gmX,\mH)$, which should abide by:
\begin{align}
\label{eq:eqconstraint-vn}
\mO\gmZ+\gvt,\mS=\varphi_{\mathrm{init}}(\mP\gmX\mO+\gvt,\mP\mH),
\end{align}
for any orthogonal transformation $\mO\in\R^{3\times 3}$, translation $\gvt\in\R^3$, and permutation $\mP\in\{0,1\}^{N\times N}$. $\gmZ$ is E(3)-equivariant and permutation-invariant with respect to $\gmX$.  

One crucial point we would like to emphasize is that the channels of $\gmZ$ (and $\mS$) actually form an \emph{ordered} set; in other words, different channel, namely, different virtual node, is endowed with different meaning and can not be organized in arbitrary permutation. This restriction, along with the separate message passing mechanism in the next subsection, serves our purpose of attaining the mutual distinctiveness amongst the virtual nodes. Besides, such orderliness also facilitates the derivation of the message function with universal expressivity, which will be detailed in~\cref{sec:lgg}.

\subsection{Message Passing and Aggregation}
\label{sec:message}

After obtaining the initialized virtual nodes, we conduct the update of all virtual nodes and all real nodes via E(3)-equivariant message passing, as illustrated in~\cref{fig:model}. This includes real-real message computation, virtual global message calculation, real-virtual message derivation, real node aggregation, and virtual node aggregation. We introduce each component in detail below. 

We denote by $(\gmX^{(0)},\gmV^{(0)},\mH^{(0)};\gmZ^{(0)},\mS^{(0)}; \gE^v)$ the initialization, and specify the $l$-th layer with superscript $l$.

\textbf{Real-Real message.}
We first derive the message between all the real nodes in the way akin to EGNN~\citep{satorras2021en}, which is given by:
\begin{align}\label{eq:mij}
    \vm^r_{ij}=\varphi_1(\vh_i^{(l)},\vh_j^{(l)},\|\gvx_i^{(l)}-\gvx_j^{(l)}\|^2,\ve_{ij}),
\end{align}
where $\varphi_1$ is a Multi-Layer Perceptron (MLP). 

\textbf{Virtual global message.}
Distinct virtual node represents distinct aspect of the geometric graph. Therefore, exhausting the correlation between each pair of the virtual nodes is able the reflect the entire behavior of the input system. To do so, we compute an invariant matrix extracted from the inner-product of the center-translated virtual coordinates:
\begin{align}\label{eq:mv}
    \vm^v=(\gmZ^{(l)}-\bar\vx\bm{1}^\top)^\top(\gmZ^{(l)}-\bar\vx\bm{1}^\top),
\end{align}
where $\bar\vx=\frac{1}{N}\sum_{i=1}^N\gvx_i$ is the CoM of the geometric graph. Clearly, $\vm^v\in\sR^{C\times C}$ is an E$(3)$-invariant matrix, and indeed an universal approximation of any E(3)-invariant function of the virtual coordinates $\gmZ^{(l)}$, according to the proof of Lemma 2 in~\cite{huang2022equivariant}.

\textbf{Real-Virtual message.}
The next step is to compute the message between the real nodes and the virtual nodes by:
\begin{align}\label{eq:miv}
\vm^v_{i}=\varphi_2\left(\vh_i^{(l)},\mS^{(l)},\bigoplus_{c=1}^C \|\gvx_i^{(l)}-\gvz_c^{(l)}\|^2,\vm^v\right),
\end{align}
where $\bigoplus$ defines the concatenations along the channel dimension, and $\varphi_2$ is an MLP. \cref{eq:miv} returns the message between each real node $i$ and the global virtual set, which will be consistently applied during the later real and virtual node aggregation processes. Again, \cref{eq:miv} is E(3)-invariant. In our implementation, we find that computing message between real node $i$ and virtual node $c$ separately, namely,
$\vm_{ic}^{v}=\varphi_2\left(\vh_i^{(l)},\vs_c^{(l)}, \|\gvx_i^{(l)}-\gvz_c^{(l)}\|^2,\vm_c^v\right)$ enables more effective training and better performance, in contrast to the global message from all virtual nodes in~\cref{eq:miv}. Here, $\vm_c^v$ denotes the $c$-th column of $\vm^v$.

\textbf{Real aggregation.}
Given the real-real message $\vm^r_{ij}$ and real-virtual message $\vm^v_i$, we are ready to derive the message aggregation for real node $i$ as follows:
\begin{align}
\gvx_i^{(l+1)}=
\nonumber
&\gvx_i^{(l)}+\alpha_i\sum_{j\in\gN(i)}(\gvx_i^{(l)}-\gvx_j^{(l)})\varphi^{r}_x(\vm^r_{ij})+\\
\label{eq:xr(l+1)}
& \frac{1}{C}\sum_{c=1}^C (\gvx_i^{(l)}-\gvz_c^{(l)})\varphi^{v}_{x}(\vm_{ic}^{v})+\varphi_v(\vh_i^{(l)})\gvv_i^{(0)},\\
\vh_i^{(l+1)}=&\vh_i^{(l)}+\varphi_h\left(\vh_i^{(l)},\alpha_i\sum_{j\in\gN(i)}\vm^r_{ij},\bigoplus_{c=1}^C\vm_{ic}^{v},\right),
\label{eq:hr(l+1)}
\end{align}
where $\alpha_i=\frac{1}{|\gN(i)|}, \varphi^r_x, \varphi^v_{x}, \varphi_v$ are MLPs outputting one-dimensional scalars, and $\varphi_h$ is also an MLP with the output dimension complying with that of $\vh_i^{(l+1)}$. The update of $\gvx_i^{(l+1)}$ (and similarly $\vh_i^{(l+1)}$) is inspired by EGNN but further extended with the message from the virtual nodes, namely, the term $\sum_{c=1}^C (\gvx_i^{(l)}-\gvz_j^{(l)})\varphi^v_{x}(\vm^v_{i})$. Previous idea of adding a global node in SGNN~\cite{han2022learning} is more like a degenerated version of ~\cref{eq:xr(l+1)} where $\varphi^r_{x}$ and $\varphi^v_{x}$ are shared. It is important to note that as the number of channels $C$ increases, the concatenation form $\bigoplus_{c=1}^C \vm_{ic}^{v} \in \sR^{CH}$ in \cref{eq:hr(l+1)} results in a significant increase in the number of MLP parameters. To enhance the scalability, we instead adopt the summation form $\beta \sum_{c=1}^C \vm_{ic}^{v}$, where $\beta = \frac{1}{C}$. 

\textbf{Virtual aggregation.}
Now, we devise the virtual node aggregation as:
\begin{align}
\label{eq:Xv(l+1)}
\gvz_c^{(l+1)}&=\gvz_c^{(l)}+\frac{1}{N}\sum_{i=1}^N (\gvz_c^{(l)}-\gvx_i^{(l)})\varphi_{Z}(\vm_{ic}^{v}),\\
\label{eq:Hv(l+1)}
\vs_c^{(l+1)}&=\vs_c^{(l)}+\varphi_{S}\left(\vs_c^{(l)},\frac{1}{N}\sum_{i=1}^N\vm_{ic}^{v} \right),
\end{align}
where, $\varphi_{Z}\in\R$ and $\varphi_{S}$ are also MLPs. To be specific, the calculation $\sum_{i=1}^N (\gvz_c^{(l)}-\gvx_i^{(l)})\varphi_{Z}(\vm_{ic}^{v})$ in~\cref{eq:Xv(l+1)} aggregates all messages from all real nodes, multiplied with different scalar $\varphi_{Z}(\vm_{ic}^{v})$. In this way, the update of different virtual node is independent to each other, for the sake of encouraging better mutual distinctiveness.

We have the flowing theoretical assurance by our design:
\begin{proposition}\label{prop:equofnn}
If the initialization of the virtual nodes satisfies~\cref{eq:eqconstraint-vn}, then after~\cref{eq:mij,eq:mv,eq:miv,eq:xr(l+1),eq:hr(l+1),eq:Xv(l+1),eq:Hv(l+1)}, the output coordinates $\gmX^{(L)}$ are E(3)-equivariant and permutation-equivaraint, the virtual coordinates $\gmZ^{(L)}$ are E(3)-equivariant and permutation-invariant, with respect to the input $\gmX^{(0)}$.
\end{proposition}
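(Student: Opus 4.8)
\emph{Proof proposal.} The plan is to prove \cref{prop:equofnn} by induction on the layer index $l$, carrying a single combined statement as the inductive invariant: for every orthogonal $\mO$ with $\mO^\top\mO=\mI$, every translation $\gvt$, and every permutation $\mP$ (inducing a bijection $\pi$ on $\{1,\dots,N\}$), the transformed input produces layer-$l$ states obeying $\gvx_i^{(l)}\mapsto\mO\gvx_{\pi(i)}^{(l)}+\gvt$, $\gvv_i^{(0)}\mapsto\mO\gvv_{\pi(i)}^{(0)}$, $\vh_i^{(l)}\mapsto\vh_{\pi(i)}^{(l)}$, $\gvz_c^{(l)}\mapsto\mO\gvz_c^{(l)}+\gvt$, and $\vs_c^{(l)}\mapsto\vs_c^{(l)}$ --- that is, real coordinates/velocities are E$(3)$- and permutation-equivariant, virtual coordinates are E$(3)$-equivariant but permutation-invariant, and all features are E$(3)$-invariant. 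Taking $l=L$ then yields the proposition. For the base case $l=0$ I would note that coordinates, velocities and features transform as stated by the definition of the transformed input in \cref{eq:equivariance}, while $\gmZ^{(0)}$ and $\mS^{(0)}$ inherit the required properties directly from the assumption \cref{eq:eqconstraint-vn} on $\varphi_{\mathrm{init}}$; in particular each column of $\mS^{(0)}$, being a free learnable parameter, is trivially invariant and untouched by $\mP$.

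For the inductive step I would first check that the three messages are E$(3)$-invariant with the correct permutation labels. In \cref{eq:mij} the only geometric argument is $\|\gvx_i^{(l)}-\gvx_j^{(l)}\|^2$, which is invariant since $\|\mO(\gvx_i^{(l)}-\gvx_j^{(l)})\|^2=\|\gvx_i^{(l)}-\gvx_j^{(l)}\|^2$ (the translation cancels in the difference, and $\mO$ is orthogonal); together with feature invariance this gives $\vm^r_{ij}\mapsto\vm^r_{\pi(i)\pi(j)}$. In \cref{eq:mv} the CoM $\bar\vx$ is a mean of the real coordinates, hence permutation-invariant and transforming as a coordinate, so $\gmZ^{(l)}-\bar\vx\bm{1}^\top\mapsto\mO(\gmZ^{(l)}-\bar\vx\bm{1}^\top)$ and the Gram matrix $\vm^v$ is invariant (the $\mO^\top\mO$ factor drops) and independent of any permutation of the real nodes. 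Then $\|\gvx_i^{(l)}-\gvz_c^{(l)}\|^2$ is invariant by the same orthogonality argument, so \cref{eq:miv} --- and equally the column-wise variant $\vm_{ic}^{v}$ --- is invariant, carrying the label $\pi(i)$ on the real index while leaving the virtual index $c$ unpermuted.

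Next I would push the transformations through the four update equations. In \cref{eq:xr(l+1)} the leading summand $\gvx_i^{(l)}$ supplies the translation $\gvt$, whereas every other summand is a \emph{relative} vector --- $(\gvx_i^{(l)}-\gvx_j^{(l)})$, $(\gvx_i^{(l)}-\gvz_c^{(l)})$, $\gvv_i^{(0)}$ --- transforming by $\mO\cdot$ with $\gvt$ cancelling, each scaled by an invariant scalar ($\varphi^r_x$, $\varphi^v_x$, $\varphi_v$ of invariant inputs); since $\alpha_i=1/|\gN(i)|$ and the neighbourhood $\gN(i)$ relabel consistently under $\pi$, the sum gives $\gvx_i^{(l+1)}\mapsto\mO\gvx_{\pi(i)}^{(l+1)}+\gvt$. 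Equation \cref{eq:hr(l+1)} passes only invariant quantities to $\varphi_h$, so $\vh_i^{(l+1)}\mapsto\vh_{\pi(i)}^{(l+1)}$. For \cref{eq:Xv(l+1)} the term $\gvz_c^{(l)}$ carries $\gvt$ and each $(\gvz_c^{(l)}-\gvx_i^{(l)})\varphi_Z(\vm_{ic}^{v})$ is a relative vector times an invariant scalar; the key point is that $\frac{1}{N}\sum_{i=1}^N$ ranges over \emph{all} real nodes, so it is permutation-invariant, yielding $\gvz_c^{(l+1)}\mapsto\mO\gvz_c^{(l+1)}+\gvt$ with no permutation on $c$, and similarly \cref{eq:Hv(l+1)} keeps $\vs_c^{(l+1)}$ invariant. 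This closes the induction.

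The step that needs the most care is the bookkeeping of the two distinct roles played by $\gvt$ and by $\mP$: one must confirm that in each coordinate update exactly one summand ($\gvx_i^{(l)}$ in \cref{eq:xr(l+1)}, $\gvz_c^{(l)}$ in \cref{eq:Xv(l+1)}) is an \emph{absolute} position that absorbs the translation while all the others are translation-free relative vectors, and that the real-node index is consistently relabelled by $\pi$ while the virtual-node index is never permuted --- which holds precisely because every virtual quantity is assembled either from the fully symmetric aggregation $\sum_{i=1}^N$ over real nodes or from the permutation-invariant CoM $\bar\vx$, despite the virtual nodes forming an ordered set with node-specific parameters. Everything else reduces to the single identity $\mO^\top\mO=\mI$.
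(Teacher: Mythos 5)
Your proof is correct and follows essentially the same route as the paper's: verify that each message equation (\cref{eq:mij,eq:mv,eq:miv}) is E$(3)$-invariant because its geometric inputs are norms or Gram matrices of relative vectors, that each coordinate update (\cref{eq:xr(l+1),eq:Xv(l+1)}) is an absolute position plus invariantly-weighted relative vectors, and then compose across layers. Your explicit induction with the permutation bookkeeping (real indices relabelled by $\pi$, virtual indices fixed via the fully symmetric sums over all real nodes and the CoM) is in fact more careful than the paper's terse composition argument, but it is the same underlying idea.
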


\subsection{Learning Objectives}
\label{sec:loss}

In this subsection, we exploit the MMD objective~\cite{borgwardt2006integrating} to
enforce the alignment between virtual and real coordinates. MMD is widely used in the research of domain adaption. Here, we derive $\gL_\mathrm{MMD}$ without the original term $k(\gvx_i^{(L)}, \gvx_j^{(L)})$ as:
\begin{align}\label{eq:mmd}
     \frac{1}{C^2}\sum_{i=1}^C\sum_{j=1}^C k(\gvz_i^{(L)}, \gvz_j^{(L)})-\frac{1}{NC}\sum_{i=1}^N\sum_{j=1}^C k (\gvx_i^{(L)},\gvz_j^{(L)}),
\end{align}
where the RBF kernel $k(\gvx,\gvy)=\exp(-\frac{\|\gvx-\gvy\|^2}{2\sigma^2})$ is E(3)-invariant, hence the MMD loss is also E(3)-invariant. Interestingly, minimizing the first term in~\cref{eq:mmd} is actually enlarging the divergence between the virtual nodes, while maximizing the second term enhances the similarity between the virtual nodes and the real ones. Note that we can just sample a small-scale subset of the real nodes for the MMD calculation at each iteration to avoid redundant costs. As the MMD is evaluated exclusively during training, this sampling strategy will not compromise the model’s equivariance or invariance.

The overall training loss $\gL$ is a combination of the MSE loss between the predicted positions and the ground truth, together with the proposed auxiliary E$(3)$-invariant MMD loss to expand the coverage of the virtual nodes towards the data distribution:
\begin{align}\label{eq:loss}
    \gL=\gL_\mathrm{MSE}(\gmX^{(L)},\gmX^\mathrm{GT})+\lambda\gL_\mathrm{MMD}(\gmZ^{(L)},\gmX^\mathrm{GT}),
\end{align}
where $\lambda$ is the balancing factor between MSE and MMD.

\subsection{Theoretical Analyses}
\label{sec:lgg}

The main complexity of FastEGNN lies in the aggregation processes for each node in \cref{eq:xr(l+1),eq:hr(l+1)}, where the number of the summation operations for all real nodes is $NK+NC$, with $K$ denoting neighbor size. To improve the efficiency for large graphs, one solution is decreasing the value of $K$, which yet will diminish the interaction between the real nodes. In this subsection, we will explain that such diminution, owing to the involvement of the virtual nodes, is alleviated and our FastEGNN will still perform well even when $K$ is decreased to zero, if the distribution of the virtual coordinates is well aligned with the real coordinates.

We focus on the update of the real coordinate $\gvx_i$ by aggregating messages from its neighbors. Without loss of generality, we assume its neighbors to be the whole set $\gmX$. The update is denoted as the function $\gvx'_i=f(\gvx_i, \gmX)$, which should be E(3)-equivariant \emph{w.r.t.} the both inputs, and permutation-invariant \emph{w.r.t.} $\gmX$. According to Proposition 10 in~\cite{villar2021scalars}, we have the following result.
\begin{proposition}[\cite{villar2021scalars}]
\label{prop:xX}
The update function must take the form $f(\gvx_i, \gmX)=\gvx_i+\sum_{j=1}^N (\gvx_j-\gvx_i)\psi(\gvx_j-\gvx_i, \gvx_1-\gvx_i,\cdots,\gvx_{j-1}-\gvx_i,\gvx_{j+1}-\gvx_i,\cdots,\gvx_{N}-\gvx_i)$, where $\psi:\R^{3N}\rightarrow\R$ is orthogonality-invariant, and permutation-invariant with respect to the last $N-1$ inputs.
\end{proposition}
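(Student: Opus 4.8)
The plan is to reduce the claim to the orthogonal vector-covariant characterization of \cite{villar2021scalars}, stripping off translation and permutation symmetry explicitly so that only the genuinely invariant-theoretic fact needs to be imported. First I would exploit translation equivariance of $f$: translating every node by the same $\gvt\in\R^3$ shifts the output by $\gvt$, so taking $\gvt=-\gvx_i$ yields $f(\gvx_i,\gmX)=\gvx_i+g(\gvx_1-\gvx_i,\dots,\gvx_N-\gvx_i)$, where $g(\gvy_1,\dots,\gvy_N)\coloneqq f(\bm{0},[\gvy_1,\dots,\gvy_N]^\top)$. One then checks that $g$ inherits O$(3)$-equivariance (since $\mO\bm{0}=\bm{0}$ and $\mO(\gvx_j-\gvx_i)=\mO\gvx_j-\mO\gvx_i$) and permutation-invariance in its $N$ arguments (permuting the rows of $\gmX$ permutes the $\gvy_j$; the slot $j=i$ is identically $\bm{0}$ and harmless, as it contributes $(\gvx_i-\gvx_i)\psi=\bm{0}$ to the final sum). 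It therefore suffices to characterize maps $g:(\R^3)^N\to\R^3$ that are O$(3)$-equivariant and permutation-invariant.

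For this I would invoke the vector-covariant representation --- Proposition 10 of \cite{villar2021scalars} --- which gives $g(\gvy_1,\dots,\gvy_N)=\sum_{j=1}^N\gvy_j\,c_j(\gvy_1,\dots,\gvy_N)$ with each coefficient $c_j$ an O$(3)$-\emph{invariant} scalar, i.e.\ a function of the Gram matrix $(\langle\gvy_k,\gvy_l\rangle)_{k,l}$. If one wanted a self-contained argument: any $\mO\in$ O$(3)$ that fixes $\mathrm{span}(\gvy_1,\dots,\gvy_N)$ pointwise also fixes $g(\gvy_1,\dots,\gvy_N)$ by equivariance, which forces $g(\gvy_1,\dots,\gvy_N)\in\mathrm{span}(\gvy_1,\dots,\gvy_N)$; expressing it in the $\gvy_j$ and recovering the coefficients through the (O$(3)$-invariant) Gram matrix --- with a pseudoinverse/limiting argument to cover configurations where the $\gvy_j$ are linearly dependent --- produces the stated invariant $c_j$.

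Finally, the $c_j$ need not be symmetric in their non-$j$ arguments, so I would symmetrize using permutation-invariance of $g$: averaging $g=\tfrac{1}{N!}\sum_{\pi\in S_N}g(\gvy_{\pi(1)},\dots,\gvy_{\pi(N)})$ and re-indexing by the position of the marked summand gives $g(\gvy_1,\dots,\gvy_N)=\sum_{j=1}^N\gvy_j\,\psi(\gvy_j,\gvy_1,\dots,\gvy_{j-1},\gvy_{j+1},\dots,\gvy_N)$, where $\psi$ is the average of $c_j$ over all permutations of its last $N-1$ slots. This $\psi$ is still O$(3)$-invariant (Gram entries are unchanged under $\mO$) and now permutation-invariant in its last $N-1$ inputs; substituting $\gvy_j=\gvx_j-\gvx_i$ back recovers the claimed form for $f$.

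I expect the only real obstacle to be the orthogonal vector-covariant step: proving $g=\sum_j\gvy_j c_j$ with invariant coefficients in the continuous (not merely polynomial) category, and in particular handling degenerate configurations where $\gvy_1,\dots,\gvy_N$ do not span $\R^3$. Since this is precisely Proposition 10 of \cite{villar2021scalars}, I would cite it rather than reprove it; a from-scratch proof would combine the first fundamental theorem of invariant theory for O$(3)$ with a density/continuity argument, which is the technical core. The translation reduction and the symmetrization are routine symmetry bookkeeping.
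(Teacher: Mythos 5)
Your proposal is correct and takes essentially the same route as the paper: the paper does not reprove this statement but imports it wholesale as Proposition~10 of \cite{villar2021scalars}, which is exactly the step you also elect to cite rather than rederive. The additional scaffolding you spell out---the translation reduction to relative coordinates, the span/Gram-matrix characterization of the O$(3)$-equivariant part, and the symmetrization over permutations---is sound and mirrors the machinery the paper itself uses in the appendix (\cref{th:transformation}, \cref{le:span}, \cref{le:inv}) when proving the analogous \cref{prop:xZ} for the ordered virtual-node set.
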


While this result is theoretically elegant, it is not so practically helpful, since the universal form of $\psi$ is still unknown. Existing equivariant GNNs (such as EGNN) only exploit its reduced but not sufficiently expressive version. 

In our model, we need to aggregate the message from the virtual nodes $\gmZ$. The update function becomes $\gvx'_i=f(\gvx_i, \gmZ)$. Notably, different from $\gmX$, the matrix $\gmZ$ is an ordered set by our design, which entails that the permutation invariance of $f$ is no longer required. This enables us to derive a more informative form of $f$ as follows.
\begin{proposition}
\label{prop:xZ}
The update function must take the form $f(\gvx_i, \gmZ)=\gvx_i+\sum_{c=1}^C (\gvz_c-\gvx_i)\psi_c\left(\bigoplus_{c=1}^C \|\gvz_c-\gvx_i\|^2, \vm^v\right)$, where $\psi_c:\R^{C+C^2}\rightarrow\R$ is an arbitrary non-linear function, and $\vm^v$ is an E(3)-invariant term by~\cref{eq:miv}. 
\end{proposition}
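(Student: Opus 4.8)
The plan is to characterize the most general E(3)-equivariant update $f(\gvx_i, \gmZ)$ that aggregates messages from the \emph{ordered} virtual set, mirroring the argument of Proposition~\ref{prop:xZ} (the Villar et al.\ scalarization result) but exploiting the fact that permutation invariance is dropped. First I would invoke the general principle that any vector-valued function which is E(3)-equivariant in all of its $C+1$ vector arguments $(\gvx_i, \gvz_1,\dots,\gvz_C)$ must be expressible as a linear combination of those vectors with coefficients that are E(3)-invariant scalar functions of the same arguments. The natural, translation-invariant basis to use here is $\{\gvz_c-\gvx_i\}_{c=1}^C$ together with the residual direction already present in the identity term; after subtracting off $\gvx_i$ (the input, carried through so the map reduces to a displacement), what remains is $\sum_{c=1}^C (\gvz_c-\gvx_i)\,\psi_c(\cdot)$, where each $\psi_c$ is an E(3)-invariant (orthogonality-invariant and translation-invariant) scalar function of $(\gvx_i,\gmZ)$.

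Next I would identify a complete set of E(3)-invariants of the configuration $(\gvx_i,\gmZ)$. Translation invariance lets us work with the centered vectors $\gvz_c-\gvx_i$; orthogonality invariance then means the invariants are generated by the pairwise inner products $\langle \gvz_a-\gvx_i,\ \gvz_b-\gvx_i\rangle$ for $a,b\in\{1,\dots,C\}$. The diagonal entries are exactly $\|\gvz_c-\gvx_i\|^2$, which form the concatenation $\bigoplus_{c=1}^C\|\gvz_c-\gvx_i\|^2$, and the off-diagonal entries, by the parallelogram/polarization identity, are affine combinations of these squared norms and the entries $\langle \gvz_a-\bar\vx,\gvz_b-\bar\vx\rangle$ — equivalently the entries of $\vm^v$ from~\cref{eq:mv} (the centering point $\bar\vx$ versus $\gvx_i$ differs only by terms already captured, so the two invariant families are interchangeable up to relabeling). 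Hence every E(3)-invariant scalar of $(\gvx_i,\gmZ)$ is a function of $\bigl(\bigoplus_{c=1}^C\|\gvz_c-\gvx_i\|^2,\ \vm^v\bigr)\in\R^{C+C^2}$, so each $\psi_c$ has precisely the claimed signature $\psi_c:\R^{C+C^2}\to\R$ with no residual permutation constraint (since the channels of $\gmZ$ are ordered, $\psi_c$ may depend arbitrarily — and differently for each $c$ — on these arguments). Conversely, any $f$ of the stated form is manifestly E(3)-equivariant in $\gvx_i$ and $\gmZ$, giving the ``must take the form'' as an exact characterization.

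The main obstacle I anticipate is the same subtlety that makes Proposition~\ref{prop:xZ} nontrivial: justifying that the linear-combination-with-invariant-coefficients representation is \emph{complete} (not merely sufficient) for vector outputs in $\R^3$. This is where one must appeal carefully to the first fundamental theorem of invariant theory for $O(3)$ — one has to rule out, or rather absorb, cross-product terms such as $(\gvz_a-\gvx_i)\times(\gvz_b-\gvx_i)$, which are $O(3)$-equivariant only up to a determinant sign and therefore do not survive as genuine E(3)- (reflection-included) equivariant vector outputs; restricting to the reflection-inclusive group E(3) is exactly what collapses the spanning set to $\{\gvz_c-\gvx_i\}$. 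I would also need to handle the degenerate low-rank configurations (e.g.\ fewer than three virtual nodes, or collinear ones) where the $\gvz_c-\gvx_i$ do not span $\R^3$; there the representation still holds because the output is forced into the span of the inputs by equivariance, so no completeness issue arises. A secondary, purely bookkeeping point is verifying that $\vm^v$ as defined in~\cref{eq:mv} (centered at $\bar\vx$) rather than at $\gvx_i$ supplies the same invariant information modulo the diagonal-norm terms; this follows from expanding $\langle\gvz_a-\bar\vx,\gvz_b-\bar\vx\rangle$ and $\langle\gvz_a-\gvx_i,\gvz_b-\gvx_i\rangle$ and noting their difference is a function of $\{\langle\gvz_c,\bar\vx-\gvx_i\rangle\}$ and $\|\bar\vx-\gvx_i\|^2$, all of which are themselves invariants of $(\gvx_i,\gmZ)$ and can be folded into the $\psi_c$.
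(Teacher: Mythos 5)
Your proposal is correct and follows essentially the same route as the paper: reduce to a translation-invariant, O(3)-equivariant function of the relative vectors $\gvz_c-\gvx_i$, invoke the Villar-et-al.\ scalarization (the reflection argument that forces the output into the span of the inputs and kills pseudo-vector terms) to obtain coefficients that are invariant functions of the Gram matrix $(\gmZ-\gvx_i)^\top(\gmZ-\gvx_i)$, and then use the polarization identity to re-express that Gram matrix through the diagonal norms $\|\gvz_c-\gvx_i\|^2$ together with the $\bar\vx$-centered matrix $\vm^v$. The paper formalizes these as its Theorems~\ref{th:transformation} and~\ref{th:xzlocal-global} and Lemmas~\ref{le:span} and~\ref{le:inv}, but the substance of each step matches yours.
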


The proof is based on Proposition 1 in SGNN~\cite{han2022learning}, and provided in appendix. \cref{prop:xZ} is practically realizable by just implementing $\psi_c$ via MLP, which is actually the form in our FastEGNN (\cref{eq:miv}). More importantly, \cref{prop:xZ} suggests that $f(\gvx_i, \gmZ)$ is able to universally approximate the messages from all the real nodes, if the virtual coordinates $\gmZ$ can well approximate the real ones $\gmX$. As presented in the last subjection, we achieve this goal by penalizing the MMD loss. Our latter experiments will also support the theoretical analyses here, showing that FastEGNN with a small value of $C$ can still perform promisingly even when all edges are dropped, while EGNN behaves poorly.

\section{Besides FastEGNN}
\label{sec:generalization}

Our proposed virtual node mechanism is designed to be modular, flexible, and easy to integrate. It can be applied to a variety of geometric graph neural networks as a plugin component, enabling performance improvements beyond the EGNN architecture. Specifically, the original network architecture remains unchanged, real nodes continue to exchange messages and update features according to the native rules of the baseline model. On top of this, we introduce an additional message pathway that allows real nodes to exchange information with virtual nodes. Since the virtual node interaction is implemented as an independent auxiliary module with its own learnable parameters and update rules, the mechanism can be easily plugged into various geometric graph neural network architectures without modifying the original model structure.

To examine its general applicability, we apply the mechanism to three representative geometric GNNs: RF~\citep{kohler2019equivariant}, an equivariant model; SchNet~\citep{schutt2018schnet}, an invariant GNN; and TFN~\citep{thomas2018tensor}, a high-degree steerable model that leverages spherical harmonics for equivariant representation. In the following, we detail the coordinate update formulations used to implement each variant.

\textbf{FastRF.} RF is a scalarization-based equivariant model that shares a similar design with EGNN. However, it differs in that it computes messages solely based on node distances, without incorporating node features into message construction process. Accordingly, in implementing FastRF, we remove the node features $\vh$ from FastEGNN and also exclude the virtual node features $\mS$ to maintain consistency with RF’s design.

\textbf{FastSchNet.} SchNet is an invariant geometric graph neural network. To update the spatial coordinates, we incorporate an equivariant coordinate prediction module that takes the invariant features generated by SchNet as input. 
Besides, real nodes receive additional messages from virtual nodes, and the overall update rule becomes:
\begin{align}
    \gvx_i^{(l+1)}=
    &\underbrace{\left[\gvx_i^{(l)} +\sum_{j\in\mathcal{N}_i}\varphi(\vh_i^{(l)},\vh_j^{(l)},\ve_{ij})(\gvx_{i}^{(l)}-\gvx_{j}^{(l)})\right]}_{\texttt{update by SchNet}} \\
    &+ \left[\frac{1}{C}\sum_{c=1}^C(\gvx_i^{(l)}-\gvz^{(l)}_c)\varphi_{x}^v(\vm_{ic}^v)+\varphi_v(\vh_i^{(l)})\gvv_i^{(0)}\right].
\end{align}

\textbf{FastTFN.} Unlike the two models above, TFN is an SO(3)-equivariant model that leverages high-degree tensors and spherical harmonics to model rotational symmetries. To implement FastTFN, we first reduce the number of the channels used in the original TFN to be 1, yielding single-channel TFN, which alleviates computational overhead and is found to be more compatible with virtual node learning. In each layer of FastTFN, the updated coordinates of real nodes are first computed using a TFN layer and then further refined by aggregating messages from virtual nodes via the following formulation:
\begin{align}
\gvx_i^{(l+1)}=&\underbrace{\left[\gvx_i^{(l)}+\sum_{j\in\mathcal{N}_i}\sY^{(\sL)}\left(\frac{\gvx^{(l)}_i-\gvx^{(l)}_j}{\|\gvx^{(l)}_i-\gvx^{(l)}_j\|}\right)\otimes_{\text{cg}}^\sW\left(\vh_j^{(l)}\|\gvv_{j}^{(0)}\right)\right]}_{\texttt{update by TFN}} \\
    &+\left[\frac{1}{C}\sum_{c=1}^C(\gvx^{(l)}_i-\gvz^{(l)}_c)\varphi_{x}^v(\vm_{ic}^v)+\varphi_v(\vh_i^{(l)})\gvv_i^{(0)}\right],
\end{align}
where the $\gvx_i^{(l)}$ is the coordinate of real node $i$, and $\sY^{\sL}(\cdot)$ embeds the distance between real node $i$ and $j$ into spherical harmonics with type in set $\sL=\{0, 1, 2,\dots,\texttt{max\_degree}\}$. $\otimes_{\text{cg}}^\sW(\cdot, \cdot)$ indicates Clebsch-Gordan (CG) tensor product with only type-1 outputs, the set $\sW$ contains the corresponding learnable weights~\cite{han2024survey}, $\vh_j$ and $\gvv_j$ are features (type-0) and velocity (type-1) of real node $j$ respectively.

\section{DistEGNN}
\label{sec:distributed_fastegnn}

Distributed training can significantly improve the scalability and computational efficiency of geometric GNNs. By distributing the data and computation overhead of large-scale geometric graphs across multiple devices, it not only accelerates the training process but also enables the model to handle extremely large graph structures that are otherwise infeasible on a single device. This capability is particularly important for simulating real-world systems with complex and mass spatial relationships.

Taking advantage of the global communication provided by virtual nodes, we develop a distributed equivariant GNN, called DistEGNN, which not only improves computational efficiency but also maintains strong predictive performance. 
Our method fuses both data parallelism and model parallelism. On one hand, a large graph is partitioned into multiple subgraphs, each processed independently by a model replica on a separate device. On the other hand, we use virtual nodes as communication intermediaries across devices, and the model’s computation inherently depends on the flow of information between the devices.

Specifically, as illustrated in~\cref{fig:distribute_framework}, for an extremely large graph that cannot fit into the memory of a single device, we firstly use a graph partition method to map each node to distinct computing device. We explore two types of graph partition methods: the trivial random partition and the METIS~\citep{metis} algorithm. 
Subsequently, each device constructs its own local graph based on the nodes distributed to it.
Besides the real nodes, we also initialize a shared, ordered set of virtual nodes on each local graph to enable cross device message passing. 
It is worth noting that, we still initialize the positions of the virtual nodes at the CoM of the entire large graph in DistEGNN. This initialization preserves translation equivariance of our model. 

During the training process, each device independently conducts parallel message passing on its assigned local graph. Meanwhile, the virtual nodes are kept synchronizing across devices in real time, ensuring seamless propagation of global information across partitions. Formally, the computations of real-real message, virtual global message, and real-virtual message follow~\cref{eq:mij,eq:mv,eq:miv}, while the real aggregation process adheres to~\cref{eq:xr(l+1),eq:hr(l+1)}. These operations are identical to those in the single-device version and can be executed in parallel on each device for the corresponding local subgraphs. By contrast, updating the virtual nodes in the distributed version requires aggregating messages from all real nodes across all devices. To formalize this, let $D$ denote the total number of the used devices, and $n_d$ indicates the index of each real node in the $d$-th device. The update for the virtual nodes in the distributed scenario is given by:
\begin{equation}\label{eq:upd_z_dist}
\gvz_c^{(l+1)} = \gvz_c^{(l)} + \frac{1}{N} \sum_{d=1}^{D}\sum_{n_d=1}^{N_{d}} (\gvz_c^{(l)}-\gvx_{n_d}^{(l)}) \varphi_{Z}(\vm_{n_dc}^{v}),
\end{equation}
\begin{equation}\label{eq:upd_s_dist}
\vs_c^{(l+1)}=\vs_c^{(l)}+\varphi_{S}\left(\vs_c^{(l)},\frac{1}{N} \sum_{d=1}^{D}\sum_{n_d=1}^{N_{d}}\vm_{n_dc}^{v} \right),
\end{equation}
where $\sum_{d=1}^{D}$ denotes summation across devices, and this requires communication between devices.

The learning objective in~\cref{eq:loss} is distributable across multiple devices. To be specific, we compute the loss of the $d$-th device in parallel as follows:
\begin{align} \label{eq:dist_loss_func}
    \gL_d=\gL_\mathrm{MSE}(\gmX^{(L)}_{d},\gmX^{\text{GT}}_{d})+\lambda\gL_\mathrm{MMD}(\gmZ^{(L)}, \gmX^{\text{GT}}_{d}),
\end{align}
where $\gmX^{(L)}_{d}$ and $\gmX^{\text{GT}}_{d}$  respectively denote the predicted and the ground-truth node coordinates in the $d$-th device. 

\begin{figure}[ht!]
    \centering
    \includegraphics[width=\linewidth]{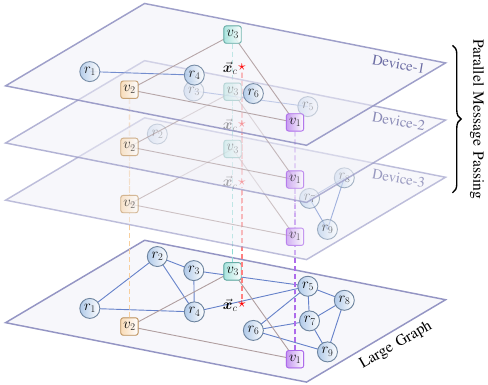}
    \vspace{-18pt}
    \caption{Overview of DistEGNN. Multiple devices collaboratively process a large graph in parallel. The core of our design is the shared virtual nodes across devices, which propagate global information and mitigate the performance loss caused by graph partitioning.}
    \label{fig:distribute_framework}
\end{figure}

PyTorch provides a function defined as \texttt{all\_reduce}  to aggregate information across devices. However, it is important to note that, in its implementation, gradients do not propagate back through the collective communication path to each individual device. This behavior leads to disrupted gradient flow in scenarios where the result of the reduction operation participates in subsequent gradient computations in~\cref{eq:upd_z_dist,eq:upd_s_dist}. To address this issue, we implement a custom cross-device aggregation operation with explicitly defined forward and backward behaviors. In the forward pass, the operation performs distributed summation across devices, while in the backward pass, it ensures gradients are correctly redistributed and propagated back to all contributing devices. 
This design helps to preserve full computational graph in the distributed environment. During training, each device computes the loss independently based on its local subgraph, while our customized method guarantees correct gradient backpropagation across distributed devices. For inference, since gradient computation is unnecessary, we disable gradient tracking for improved efficiency. 

The pseudocode for training DistEGNN is provided in~\cref{alg:dist}.

\begin{algorithm}
\caption{Training DistEGNN on $D$ Devices.}
\label{alg:dist}
\KwData{Large graph $\vec{\gG} = (\gmX,\mH;\gE)$.}
\KwResult{Trained DistEGNN model with $L$ layers.}
\tcp{Initialization.}
Initialize virtual nodes coordinates $\gmZ=\frac{1}{N}\sum_{i=1}^N\gvx_i \bm{1}^\top$, learnable features $\mS \in \sR^{H\times C}$\;
Partition $\vec{\gG}$ into $D$ parts and construct subgraphs with the shared virtual nodes, assigning $\vec{\gG}^v_{1}, \dots, \vec{\gG}^v_{D}$ to $D$ devices\;

\tcp{Parallel Message Passing.}
\For{$l = 1$ to $L$}{
    Center of graph $\bar\vx = \frac{1}{N} \sum_{d=1}^{D} \left(\sum_{n_d=1}^{N_{d}} \gvx_{n_d}\right)$\;
    Calculate message using \cref{eq:mij,eq:mv,eq:miv}\;
    \tcp{For Real Nodes.}
    Aggregate and update with \cref{eq:xr(l+1),eq:hr(l+1)}\;
    \tcp{For Virtual Nodes.}
    Aggregate and update with \cref{eq:upd_z_dist,eq:upd_s_dist}\;
}

\tcp{Calculate loss and update model.}
Calculate local loss $\mathcal{L}_d$ with \cref{eq:dist_loss_func} and compute gradients $\nabla_\theta \mathcal{L}_d$ on each device\;
Synchronize gradients across devices to get $\nabla_\theta \mathcal{L}$\;
Update model parameters on each device: $\theta \leftarrow \theta - \eta \nabla_\theta \mathcal{L}$\;
\end{algorithm}

\section{Experiments}

In this section, we conduct comprehensive experiments to evaluate the \textbf{effectiveness} of FastEGNN, the \textbf{generalizability} of our virtual node module and the \textbf{scalability} of DistEGNN on challenging physical simulation tasks over large geometric graphs. 
We begin by introducing the datasets used in our experiments in~\cref{sec:dataset}.
Then, in~\cref{sec:exp_effectiveness}, we demonstrate the effectiveness of our proposed FastEGNN through benchmark comparisons and ablation studies.
In~\cref{sec:exp_generalizability}, we examine the generalizability of the proposed virtual node mechanism by applying it to other representative models, leading to FastRF, FastSchNet, and FastTFN.
Finally, we evaluate the efficiency and efficacy of the DistEGNN on our generated extremely large graph dataset in~\cref{sec:exp_scalability}.

\subsection{Dataset Details}
\label{sec:dataset}

We comprehensively benchmark FastEGNN on three challenging simulation datasets to evaluate its effectiveness and generalizability. Additionally, we introduce a fourth dataset of custom-generated large-scale graphs (with 113K nodes per graph on average), designed to assess the scalability of DistEGNN. %

\textbf{$N$-body System}~\cite{satorras2021en,kipf2018neural}. In this simulation, each system comprises $N=100$ charged particles with random charge $c_i\in\{\pm 1\}$, whose movements are driven by coulomb force. The graph is constructed in a fully-connected manner. We use 5,000 samples for training, 2,000 for validation, and 2,000 for testing. The task is to predict the particle positions at frame 40 given their positions and velocities at frame 30.

\textbf{Protein Dynamics}~\cite{han2022equivariant}. The protein molecular dynamics dataset is processed from MDAnalysis~\cite{gowers2016mdanalysis}, which depicts a long-range AdK equilibrium MD trajectory~\cite{seyler5108170molecular}. Following previous work~\cite{han2022equivariant}, we model the dynamics of the backbone atoms, leading to 855 nodes per sample. A total number of 55,108 edges (on average) are connected between the atoms within a distance cutoff of 10\AA. The dataset has been split into train/validation/test sets that contain 2,481/827/878 frame pairs, respectively, with the time span $\Delta t=15$.

\textbf{Water-3D}~\cite{sanchez2020learning}. Water-3D is a large-scale particle-based fluid simulation dataset generated with Smoothed-Particle Hydrodynamics (SPH). The dataset records the dynamics of water falling in a box, with 1000 trajectories for training, 100 for validation, and 100 for testing. There are on average a large amount of 7,806 particles and 94,931 edges in each system, where the edges are connected with a cutoff of 0.035. The prediction task involves forecasting particle positions with a time span of $\Delta t=15$ frames. It serves as a challenging benchmark for FastEGNN.

\textbf{Fluid113K}~\cite{Ummenhofer2020Lagrangian}. Fluid113K is an extremely large-scale particle-based fluid simulation dataset constructed based on the open source implementation from DeepLagrangianFluids~\cite{Ummenhofer2020Lagrangian}. The simulation is performed within the SPlisHSPlasH~\cite{SPlisHSPlasHLibrary} framework, which enables high-fidelity modeling of incompressible fluid dynamics. We adopt the scenario in which a single fluid body falls inside a standard cubic container. To increase the scale of each simulation, we enlarge both the particle size and container size, ensuring that each trajectory contains more than 100,000 particles. In total, we generate 100 trajectories for training, 20 for validation, and 20 for testing. Each trajectory simulates 4 seconds of motion, recorded at 50 Hz, resulting in 200 frames. During training, we randomly sample 16 starting time steps from each trajectory and set the prediction interval to $\Delta t = 20$. On average, each frame contains 113,140 particles and 1,706,973 edges. As an exceptionally large graph dataset, Fluid113K poses significant challenges for evaluating the scalability of geometric graph neural networks. The data generation process may take approximately a week to complete, depending on the CPU performance.

\subsection{Effectiveness of FastEGNN} 
\label{sec:exp_effectiveness}

\textbf{Implementation Details} To evaluate the effectiveness of our method, we instantiate several variants of FastEGNN by exploring different combinations of the number of virtual nodes $C$ and the edge dropping rate $p$, denoted as FastEGNN-$\langle C, p \rangle$. We adopt the following edge dropping strategy: We sort all edges based on the distance between the connected nodes $\|\vec\vx_i-\vec\vx_j \|_2$ and drop the top $p\,(\times 100\%)$ longest edges.  

\textbf{Baselines.} We compare our models with the following baselines: the simplest equivariant model Linear dynamics~\cite{satorras2021en}, the non-equivariant Message-Passing Neural Network (MPNN)~\cite{gilmer2017neural}, the invariant GNN model SchNet~\cite{schutt2018schnet}, and the equivariant GNNs including Tensor Field Networks (TFN)~\cite{thomas2018tensor}, Radial Field (RF)~\cite{kohler2019equivariant}, and EGNN~\cite{satorras2021en}. We also evaluate EGNN$^\ast$, a variant of EGNN that removes all edges in the graph, as a reference.

\textbf{Metrics.} \textbf{1.} \emph{MSE}: We use the Mean Squared Error (MSE) between the predicted position and the ground truth on the testing set as the metric to measure the prediction accuracy. Note that we add random rotations on input and target graph during testing to assess the model's equivariance.
\textbf{2.} \emph{Relative Time}: To demonstrate the speed-up effect of FastEGNN, we also benchmark the inference time of all models to traverse through the entire testing set, and compute their relative scales \emph{w.r.t.} the inference time of EGNN.

\subsubsection{Main Results}

\begin{table*}[ht!]
\centering
\caption{Results of FastEGNN and baselines on $N$-body System ($100$ nodes), Protein Dynamics ($855$ nodes), and Water-3D ($7806$ nodes) datasets. FastEGNN-$\langle C, p\rangle$ denotes the model with the number of virtual nodes as $C$ and edge dropping rate as $p$. We also report the results of EGNN* which indicates the EGNN model with all edges dropped.}
\label{tab:main_result}
\adjustbox{width=\textwidth, center, padding = 2pt}{
\begin{tabular}{
    p{0.18\textwidth}<{\raggedright}
    p{0.11\textwidth}<{\centering}
    p{0.11\textwidth}<{\centering}
    p{0.11\textwidth}<{\centering}
    p{0.11\textwidth}<{\centering}
    p{0.11\textwidth}<{\centering}
    p{0.11\textwidth}<{\centering}
}
\toprule
    & \multicolumn{2}{c}{$N$-body System} & \multicolumn{2}{c}{Protein Dynamics} & \multicolumn{2}{c}{Water-3D} \\
    & MSE ($\times 10^{-2}$) & Relative Time & MSE & Relative Time & MSE ($\times 10^{-4}$) & Relative Time \\
\midrule
    Linear                               & $12.66$ & $0.01$  &  $2.26$   & $0.01$  &  $14.06$   & $0.01$  \\
    MPNN~\citep{gilmer2017neural}        & $3.06$  & $0.62$  &  $150.56$ & $0.62$  &  $5299.30$ & $0.61$  \\
    SchNet~\citep{schutt2018schnet}      & $24.83$ & $1.39$  &  $2.56$   & $1.40$  &  $35.02$   & $2.44$  \\
    RF~\citep{kohler2019equivariant}     & $5.76$  & $0.26$  &  $2.25$   & $0.25$  &  $12.94$   & $0.25$  \\
    TFN~\citep{thomas2018tensor}         & $1.62$  & $17.02$ &  $2.26$   & $17.81$ &  $9.10$    & $19.53$ \\
    EGNN~\citep{satorras2021en}          & $1.41$  & $1.00$  &  $2.25$   & $1.00$  &  $6.00$    & $1.00$  \\
    EGNN*~\citep{satorras2021en}         & $11.62$ & $0.03$  &  $2.26$   & $0.06$  &  $12.38$   & $0.11$  \\ 

\midrule
    FastEGNN-$\langle 1 ,\ 0.00\rangle$  & $1.04$          & $1.15$  &  $1.89$          & $1.15$  &  $3.25$          & $1.35$ \\
    FastEGNN-$\langle 1 ,\ 0.75\rangle$  & $1.03$          & $0.32$  &  $1.91$          & $0.65$  &  $3.64$          & $0.47$ \\
    FastEGNN-$\langle 1 ,\ 1.00\rangle$  & $9.72$          & $0.07$  &  $2.00$          & $0.12$  &  $4.36$          & $0.22$ \\

\midrule
    FastEGNN-$\langle 3 ,\ 0.00\rangle$  & $1.06$          & $1.19$  &  $\textbf{1.82}$ & $1.23$  &  $\textbf{2.58}$ & $1.51$ \\
    FastEGNN-$\langle 3 ,\ 0.75\rangle$  & $1.09$          & $0.36$  &  $1.85$          & $0.72$  &  $3.13$          & $0.65$ \\
    FastEGNN-$\langle 3 ,\ 1.00\rangle$  & $9.52$          & $0.11$  &  $1.92$          & $0.18$  &  $3.60$          & $0.40$ \\

\midrule
    FastEGNN-$\langle 10,\ 0.00\rangle$  & $1.10$          & $1.36$  &  $1.84$          & $1.47$  &  $2.66$          & $1.81$ \\
    FastEGNN-$\langle 10,\ 0.75\rangle$  & $\textbf{0.99}$ & $0.53$  &  $1.88$          & $0.98$  &  $2.94$          & $0.85$ \\
    FastEGNN-$\langle 10,\ 1.00\rangle$  & $9.25$          & $0.27$  &  $1.99$          & $0.43$  &  $3.40$          & $0.61$ \\

\bottomrule
\end{tabular}
}
\end{table*}

The main quantitative results are presented in~\cref{tab:main_result}. We have the following observations:

\textbf{1.} Our FastEGNN yields the lowest simulation error on all three benchmarks, consistently outperforming the competitive baselines by a significant margin. For instance, FastEGNN yields 29\% and 19\% improvement in terms of MSE over the best-performed baseline EGNN on $N$-body system and Protein Dynamic datasets, respectively. On the most challenging dataset Water-3D with an average of 7,806 particles in each system, FastEGNN reaches a remarkably low simulation error of $2.58\times 10^{-4}$, as opposed to EGNN with an MSE of $6.00\times 10^{-4}$. The strong results unanimously demonstrate the superiority of FastEGNN in learning to simulate physical dynamics and especially scaling to large and complicated systems.

\textbf{2.} With the help of edge dropping, FastEGNN is able to conduct inference in a substantially faster manner. Such effect is evident in the Relative Time metric, where FastEGNN with an edge dropping rate of 0.75 or 1.00 increase the inference speed. Notably, FastEGNN with 10 virtual nodes and 75\% of the edges dropped delivers the lowest MSE on $N$-body system while only using 53\% of the inference time of EGNN.

\textbf{3.} When all edges are preserved, FastEGNN performs favorably across all datasets compared with EGNN, thanks to the proposed virtual node learning technique that boosts the model expressivity. Interestingly, we also observe that the performance is only slightly affected even when a large proportion of the edges are dropped. Even in the extreme case where all edges are removed, FastEGNN still achieves competitive results. For example, on the Water-3D dataset, FastEGNN-$\langle 3, 1.00\rangle$ with all edges dropped still obtains an MSE of 3.60, remarkably lower than the MSE of 6.00 produced by EGNN. By contrast, EGNN$^\ast$ performs poorly in all cases, with performance close to Linear Dynamics. Overall, the results show that our virtual learning enhances the expressivity of the model and improves the performance, while also enabling edge dropping for higher inference speed with minimal sacrifice in prediction accuracy.

\subsubsection{Rollout Experiment}

\begin{figure*}[t!]
    \centering
    \includegraphics[width=\linewidth]{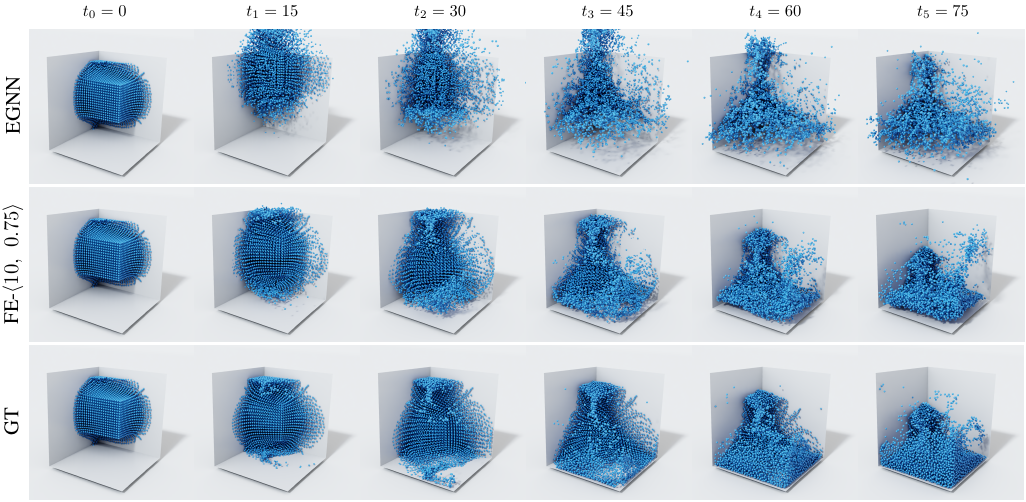}
    \vspace{-18pt}
    \caption{Rollout results of EGNN and FastEGNN-$\langle 10 ,\ 0.75\rangle$ (abbreviated as FE-$\langle 10 ,\ 0.75\rangle$ for brevity) on Water-3D dataset. After the fluid hits the box, EGNN exhibits unstable predictions, whereas our model produces results that match the ground truth more closely.}
    \label{fig:Water-3D-Rollout}
\end{figure*}

To further evaluate the robustness and accuracy of our model, we design a rollout experiment. In this setup, the model recursively uses its own predictions as inputs for the subsequent step. Compared to single-step prediction, this setting is more challenging, as prediction errors can accumulate and intensify throughout the rollout process.

We evaluate the rollout performance of EGNN and FastEGNN-$\langle10,\ 0.75\rangle$ by performing five consecutive rollout steps for each model. The time span between rollout steps is $\Delta t = 15$ frames(corresponding to 0.075 seconds), which is consistent with the configuration in our main experiments. We render the fluid shape at each step, as shown in~\cref{fig:Water-3D-Rollout}. From the visualizations, we can observe that EGNN’s predictions are highly unstable: the predicted particles are spatially dispersed, with many of them escaping far beyond the boundaries of the container. In contrast, FastEGNN yields much more stable and accurate results, with the predicted fluid shapes closely matching the ground truth. This indicates that FastEGNN can better capture the inherent physical laws.

\subsubsection{Ablation Studies}
\label{sec:exp_abl}

\begin{figure*}[t!]
    \centering
    \includegraphics[width=\linewidth]{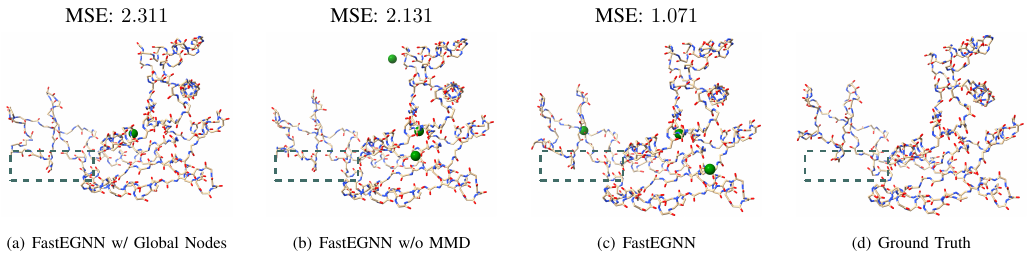}
    \vspace{-18pt}
    \caption{The results of different variants of FastEGNN on Protein Dynamics. We use green dots to indicate the learned locations of the virtual nodes. The dashed box marks highlight the region of interest where FastEGNN yields more accurate prediction. For FastEGNN w/ Global Nodes, the learned coordinates of 3 virtual nodes are almost the same.}
    \label{fig:protein}
\end{figure*}

\textbf{The number of virtual nodes.} We investigate the effect of the number of virtual nodes together with the edge dropping rate on three datasets. We sweep the number of virtual nodes in $\{1, 3, 10\}$ and the edge dropping rate in $\{0.00, 0.75, 1.00\}$, with the detailed results shown in~\cref{tab:main_result}. We also involve the results of EGNN with the same edge dropping rates for reference. It is clear that, under the same edge dropping rate, FastEGNN consistently outperforms EGNN in terms of prediction accuracy, benefiting from the incorporation of virtual nodes. In addition, it introduces only a marginal increase in inference time, owing to our efficient design of message computation and aggregation scheme for virtual nodes.

Notably, we observe that increasing the number of virtual nodes does not always lead to better performance. On the Protein Dynamics and Water-3D datasets, the best accuracy is achieved when using three virtual nodes, while the performance slightly degrades when the number increases to ten. This suggests that a limited number of virtual nodes is sufficient to enhance model performance, whereas an excessive number may introduce redundancy and reduce effectiveness.

\begin{table}[!t]
  \centering
  \caption{Ablation studies on the design of virtual node learning. Experiments are conducted on Protein Dynamics dataset and all variants of FastEGNN in this experiment contains 3 virtual nodes.}
  \adjustbox{width=0.48\textwidth, center, padding = 2pt}{
      \begin{tabular}
      {
          p{0.2\textwidth}<{\raggedright}
          ccc
      }

        \toprule
        \multirow{3}{*}{Model Setting}   & \multicolumn{3}{c}{Dropping Rate}\\\cmidrule{2-4}
                                 & 0.00             & 0.75               & 1.00             \\
        \midrule
        EGNN                     & $2.250$          & $2.244$            & $2.257$          \\
        \midrule
        FastEGNN w/ Global Nodes & $1.950$          & $1.967$            & $2.121$          \\
        FastEGNN w/o MMD         & $1.857$          & $1.926$            & $1.975$          \\
        FastEGNN                 & $\bm{1.821}$     & $\bm{1.852}$       & $\bm{1.919}$     \\
        \bottomrule

    \end{tabular}
  }
  \label{tab:ablation}
\end{table}

\textbf{Modeling virtual nodes as ordered set.} We have proposed to model the virtual nodes as an ordered set instead of an unordered set, which enforces mutual distinctiveness as an inductive bias. We demonstrate the validity of such a design in~\cref{tab:ablation}, where we implement a variant of FastEGNN referred to as \emph{FastEGNN w/ Global Nodes}, which treats virtual nodes as global nodes with permutation-equivariant message passing in between. 

We discover that naively viewing them as global nodes leads to worse performance and indistinguishable assignment of the virtual nodes (\cref{fig:protein}(a)). 
Their MSE is consistently higher than that of the original FastEGNN. Moreover, the performance drops sharply as the dropping rate increases from 0.00 to 1.00, with MSE rising from 1.950 to 2.121.
In contrast, our approach endows virtual nodes with orders and different roles, resulting in generally lower MSE.

\textbf{The impact of MMD loss.} As described in~\cref{sec:loss}, we introduce the MMD loss to enhance the mutual distinctiveness and global distributedness of virtual nodes, encouraging them to perform diverse and complementary roles. To assess its impact, we implement a variant without the MMD regularization, referred to as \emph{FastEGNN w/o MMD}. As shown in~\cref{tab:ablation}, removing the MMD loss during training results in a clear drop in performance. On the Protein Dynamic dataset, the model exhibits an average performance decline of an average drop of 3\% across all three edge dropping rates. This limitation is also visually evident in~\cref{fig:protein}. As shown in panel (b), without the MMD loss, all three virtual nodes are located on the right-hand side of the protein structure, failing to capture its overall geometry. In contrast, with MMD applied, FastEGNN(\cref{fig:protein}(c)) learns a more geometrically meaningful distribution of virtual nodes that better reflects the spatial characteristics of the underlying data.

\subsection{Generalizability of FastEGNN}
\label{sec:exp_generalizability}

\begin{table*}[ht!]
\centering
\caption{Comparisons between EGNN~\citep{satorras2021en}, RF~\citep{kohler2019equivariant}, SchNet~\citep{schutt2018schnet}, TFN~\citep{thomas2018tensor} and their enhanced models involving virtual node learning. Note that the TFN series does not support the edge dropping rate of 1.00, as the initialization via spherical harmonics requires the presence of edges.}
\label{tab:generality_analysis}
\adjustbox{width=\textwidth, center, padding = 2pt}{
\begin{tabular}{
    p{0.115\textwidth}<{\raggedright}
    p{0.08\textwidth}<{\centering}
    p{0.08\textwidth}<{\centering}
    p{0.08\textwidth}<{\centering}
    p{0.003\textwidth}<{\centering}
    p{0.08\textwidth}<{\centering}
    p{0.08\textwidth}<{\centering}
    p{0.08\textwidth}<{\centering}
    p{0.003\textwidth}<{\centering}
    p{0.08\textwidth}<{\centering}
    p{0.08\textwidth}<{\centering}
    p{0.08\textwidth}<{\centering}
}
\toprule
                  & \multicolumn{3}{c}{$N$-body System ($\times 10^{-2}$)} & & \multicolumn{3}{c}{Protein Dynamics} & & \multicolumn{3}{c}{Water-3D ($\times 10^{-4}$)} \\
    Dropping Rate & $0.00$ & $0.75$ & $1.00$ &                               & $0.00$ & $0.75$ & $1.00$ &             & $0.00$ & $0.75$ & $1.00$                        \\
\midrule
    EGNN~\citep{satorras2021en}      & $1.30$ & $1.34$ & $11.62$ &           & $2.25$ & $2.24$ & $2.26$ &             & $6.00$ & $9.87$ & $12.38$                       \\
    FastEGNN-$1$                     & $1.04$ & $1.03$ & $9.72$  &           & $1.89$ & $1.91$ & $2.00$ &             & $3.25$ & $3.64$ & $4.36$                        \\ 
    FastEGNN-$3$                     & $1.06$ & $1.09$ & $9.52$  &           & $1.82$ & $1.85$ & $1.91$ &             & $2.58$ & $3.13$ & $3.60$                        \\
    FastEGNN-$10$                    & $1.10$ & $0.99$ & $9.25$  &           & $1.84$ & $1.88$ & $1.99$ &             & $2.66$ & $2.94$ & $3.40$                        \\ 
\midrule
    RF~\citep{kohler2019equivariant} & $5.76$ & $5.77$ & $11.65$ &           & $2.25$ & $2.26$ & $2.26$ &             & $12.94$ & $12.10$ & $12.49$                     \\
    FastRF-$1$                       & $4.17$ & $4.19$ & $10.22$ &           & $2.18$ & $2.21$ & $2.21$ &             & $7.33$  & $7.31$  & $7.61$                      \\
    FastRF-$3$                       & $4.13$ & $4.90$ & $10.24$ &           & $2.19$ & $2.21$ & $2.21$ &             & $7.71$  & $7.35$  & $7.97$                      \\
    FastRF-$10$                      & $4.04$ & $4.24$ & $10.20$ &           & $2.19$ & $2.21$ & $2.21$ &             & $7.72$  & $7.90$  & $8.04$                      \\
\midrule
    SchNet~\citep{schutt2018schnet}  & $24.83$ & $25.80$ & $35.49$ &         & $2.56$ & $2.56$ & $2.60$ &             & $35.02$ & $52.73$ & $57.81$                     \\ 
    FastSchNet-$1$                   & $23.61$ & $23.70$ & $25.96$ &         & $2.01$ & $2.03$ & $2.22$ &             & $14.29$ & $15.23$ & $18.69$                     \\
    FastSchNet-$3$                   & $23.60$ & $23.59$ & $25.95$ &         & $1.98$ & $2.08$ & $2.14$ &             & $13.36$ & $13.16$ & $15.14$                     \\
    FastSchNet-$10$                  & $23.73$ & $23.80$ & $25.97$ &         & $1.99$ & $2.05$ & $2.11$ &             & $14.38$ & $13.09$ & $13.94$                     \\
\midrule
    TFN~\citep{thomas2018tensor}     & $1.62$ & $4.53$ & --      &           & $2.25$ & $2.26$ & --     &             & $9.10$ & $15.91$ & --                           \\
    FastTFN-$1$                      & $3.42$ & $4.34$ & --      &           & $1.90$ & $1.99$ & --     &             & $3.89$ & $5.15$  & --                           \\
    FastTFN-$3$                      & $3.42$ & $3.65$ & --      &           & $1.86$ & $1.92$ & --     &             & $3.33$ & $3.98$  & --                           \\
    FastTFN-$10$                     & $3.87$ & $3.99$ & --      &           & $1.92$ & $1.95$ & --     &             & $3.15$ & $3.61$  & --                           \\
\bottomrule
\end{tabular}
}
\end{table*}

\begin{figure*}[ht!]
    \centering
    \includegraphics[width=\linewidth]{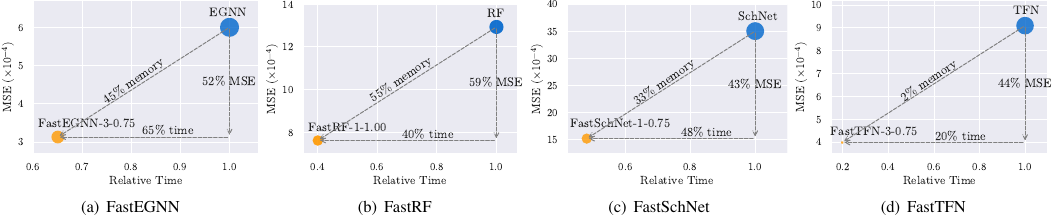}
    \vspace{-18pt}
    \caption{The performance of four base and enhanced models on the Water-3D dataset in terms of speed, accuracy, and memory. Points closer to the lower-left corner and smaller in size indicate better overall efficiency. }
    \label{fig:fast_models}
\end{figure*}

To validate the generalizability of our proposed virtual node mechanism, we extend it to three representative geometric GNNs with distinct architectures: RF~\citep{kohler2019equivariant}, SchNet~\citep{schutt2018schnet}, and TFN~\citep{thomas2018tensor}, leading to the enhanced variants FastRF, FastSchNet, and FastTFN. The coordinate update equations for these enhanced models are detailed in~\cref{sec:generalization}.

We evaluate the performance of baseline and enhanced models on $N$-body, Protein Dynamics, and Water-3D dataset under the same setups and metrics as described in~\cref{sec:exp_effectiveness}. Each enhanced model includes 9 variants, constructed by combining three different numbers of virtual nodes $\{1, 3, 10\}$  with three edge dropping rates $\{0.00, 0.75, 1.00\}$. The results are detailed in~\cref{tab:generality_analysis}.

We further visualize the improvements in speed, accuracy, and memory usage of the four enhanced models compared to their corresponding base models, as shown in~\cref{fig:fast_models}. We can observe that, by introducing virtual nodes and applying edge dropping, all four enhanced models achieve significant improvements in both accuracy and efficiency. For example, the FastRF-1-1.00 achieves 59\% of RF’s MSE while using only 40\% of the training time and 55\% of the memory. Moreover, compared to the resource-intensive TFN, FastTFN uses only 2\% of the memory and 20\% of the computation time, while reducing prediction error by 56\%. These examples consistently demonstrate the advantages of our approach.

Of all 36 experimental configurations, the only unexpected case where the baseline model outperforms its enhanced version is TFN on the $N$-body dataset without edge dropping. One possible reason is that, during the construction of FastTFN, we reduced the number of channels in the TFN layer from 32 to 1 for lower computational cost and better compatibility, which in turn weakened its ability to simulate the $N$-body dataset. However, we find that TFN is highly sensitive to graph connectivity. When the edge dropping rate increases to 0.75, the MSE sharply rises from 1.62 to 4.53. In contrast, FastTFN exhibits greater robustness in such settings. With 75\% of the edges dropped, FastTFN reverses the performance gap and achieves better accuracy than TFN. This demonstrating that the virtual node mechanism effectively mitigates the impact of missing connections and enhances the model’s expressiveness and stability on sparse graphs.

\subsection{Evaluation of DistEGNN}
\label{sec:exp_scalability}

In~\cref{sec:distributed_fastegnn}, we present a distributed approach for geometric GNNs based on graph partitioning. This method significantly enhances the model's scalability, making it feasible to simulate geometric graphs with millions of edges. In this section, we design a set of challenging experiments to validate the advantages of DistEGNN in terms of computational overhead and prediction accuracy. The experiment setup and corresponding results are detailed below.

\subsubsection{Implementation Details}

We evaluate the performance of EGNN and DistEGNN on two datasets: Water-3D and Fluid113K. Fluid113K is our generated extremely large scale dataset, with each graph contains over 100K nodes and more than 1M edges. Simulating such large-scale graphs requires over 40GB of memory even with a batch size of 1, making it impossible to run on single low-memory device such as a NVIDIA-V100-32G. However, with our distributed strategy, the memory burden on each device is greatly reduced, enabling multiple low-memory devices to jointly simulate one large graph.

For the Water-3D dataset, we set the batch size to 16 and the virtual nodes to $\{1, 3, 10\}$. For Fluid113K, we use a batch size of 1 and fix the number of virtual nodes to 5. Each model is trained using 1 to 8 computing devices. To measure relative inference time and peak memory usage per device, we perform training and inference for 14 times in a consistent environment, discard the two highest and two lowest values, and report the average of the remaining runs as the final result.

We use random graph partitioning as the default strategy, where each node has an equal probability of being assigned to any computing device. The impact of alternative partitioning methods on model performance is analyzed in~\cref{sec:dist_ablation}. 
After partitioning, each device contains nodes with lower spatial density. We still apply the same cutoff radius for edge construction as in the single-device setting by default. This improves the speedup ratio but poses a significant challenge to the model’s performance on these sparse graphs. 
Moreover, we also explore dynamically adjusting the cutoff radius based on the sparsity of the graph in ablation study, with results presented in~\cref{tab:distribute_water3d_same_degree}. 

\subsubsection{Main Results}

\begin{figure*}[ht!]
    \centering
    \includegraphics[width=\linewidth]{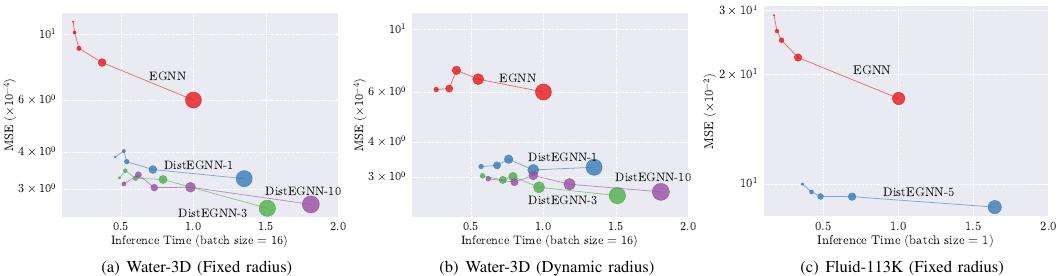}
    \vspace{-18pt}
    \caption{Visualization of inference time, accuracy, and memory usage of EGNN~\citep{satorras2021en} and DistEGNN. The larger circles indicate higher memory usage. For each model, we visualize the results on 1, 2, 3, 4, and 8 devices.}
    \label{fig:distribute_main}
\end{figure*}

Our detailed experimental results are presented in~\cref{tab:distribute_water3d_same_radius,tab:distribute_largefluid_same_radius}. For better illustration, we visualize the inference time, accuracy, and the peak memory usage across different models in~\cref{fig:distribute_main}. The analysis of the results leads to the following conclusions:

\textbf{1.} Our enhanced model consistently demonstrates better prediction accuracy than EGNN. As the~\cref{fig:distribute_main} shows, the worst result of the DistEGNN still surpasses the best performance achieved by EGNN. For instance, DistEGNN-3 achieves a MSE of 3.27 on the Water-3D dataset even when running in parallel on 8 computing devices, which is substantially lower than EGNN’s best result of 6.00 under a single-device setup. In addition, its inference time is reduced to just 49\% of that required by the single-device EGNN.

\textbf{2.} Our distributed parallelization approach significantly reduces both computational time and memory consumption, making it feasible to simulate extremely large graphs using multiple less powerful devices with limited memory. As shown in~\cref{tab:distribute_largefluid_same_radius}, processing a single graph of Fluid113K by one device requires 42GB and 48GB of memory for EGNN and DistEGNN-5, respectively. These memory requirements exceed the capabilities of many common devices. With our distributed strategy applied, both time and memory costs are significantly reduced in practice. For example, the training time of DistEGNN-5 with two cooperating devices is reduced to 48\%, resulting in a $2.06 \times$ speedup. The memory usage per device also drops from about 48GB to 16GB, representing a 67\% reduction. When using 8 devices, the training becomes $7.34 \times$ faster, and the memory required per device is reduced to only 1.58GB, making it possible to simulate graphs with over 100,000 nodes using minimal resources.

\textbf{3.} Our virtual nodes effectively mitigate performance degradation caused by graph partitioning. As shown in~\cref{fig:distribute_main}, when the number of computing devices increases from 1 to 8, EGNN exhibits a much more severe performance drop compared to DistEGNN. Quantitatively, 
on the Water-3D dataset, EGNN’s performance declines by 84\% under the 8 devices setting, while DistEGNN-10 incurs only a 17\% loss. On the Fluid113K dataset, DistEGNN demonstrates even stronger robustness: EGNN experiences a 69\% drop in accuracy, whereas DistEGNN-5 exhibits a minor reduction of just 15\%. We attribute this advantage to the global information transmitted by virtual nodes across devices, which effectively bridges the information gap between subgraphs caused by distributed graph partitioning. This allows DistEGNN to maintain high accuracy under distributed settings, enhancing both scalability and practical applicability.

\textbf{4.} Moderately increasing the number of virtual nodes leads to noticeable improvements in model performance. As shown in~\cref{fig:distribute_main}(a), the curve for DistEGNN-3 lies below that of DistEGNN-1. This indicates that increasing the number of virtual nodes generally enhances the model's prediction accuracy. However, further increasing the number of virtual nodes from 3 to 10 brings in minor benefits. 
At the same time, thanks to our efficient implementation of message construction and aggregation, this increase in virtual nodes introduces only limited computational overhead. This makes it worthwhile to use more virtual nodes in exchange for improved accuracy. For instance, in the 8 device setting, DistEGNN-10 incurs just 13\% more inference time than DistEGNN-1, yet achieves a 23\% gain in prediction accuracy.

\begin{table*}[ht!]
\centering
\caption{Performance of EGNN~\citep{satorras2021en} and DistEGNN on the Water-3D dataset using the fixed cutoff radius for graph construction.}
\label{tab:distribute_water3d_same_radius}
\adjustbox{width=\textwidth, center, padding = 2pt}{
\begin{tabular}{
    p{0.19\textwidth}<{\raggedright}
    p{0.115\textwidth}<{\raggedright}
    p{0.075\textwidth}<{\centering}
    p{0.075\textwidth}<{\centering}
    p{0.075\textwidth}<{\centering}
    p{0.075\textwidth}<{\centering}
    p{0.075\textwidth}<{\centering}
    p{0.075\textwidth}<{\centering}
    p{0.075\textwidth}<{\centering}
    p{0.075\textwidth}<{\centering}
}
\toprule
                                                  &                             &  $1$-device & $2$-devices & $3$-devices & $4$-devices & $5$-devices & $6$-devices & $7$-devices & $8$-devices   \\
\midrule
    \multirow{3}{*}{Graph information}            & Cutoff radius               &  $0.035$    & $0.035$     & $0.035$     & $0.035$     & $0.035$     & $0.035$     & $0.035$     & $0.035$     \\
                                                  & Edge count                  &  $94931$    & $23711$     & $10548$     & $5930$      & $3794$      & $2637$      & $1936$      & $1481$      \\
                                                  & Average degree              &  $12.16$    & $3.04$      & $1.35$      & $0.76$      & $0.49$      & $0.34$      & $0.25$      & $0.19$      \\
\midrule
    \multirow{4}{*}{MSE ($\times 10^{-4}$)}       & EGNN~\citep{satorras2021en} &  $6.00$     & $8.04$      & $8.98$      & $10.16$     & $10.95$     & $10.57$     & $11.03$     & $11.05$     \\
                                                  & DistEGNN-$1$                &  $3.25$     & $3.49$      & $3.71$      & $4.03$      & $4.00$      & $3.87$      & $4.11$      & $3.85$      \\
                                                  & DistEGNN-$3$                &  $2.58$     & $3.23$      & $3.26$      & $3.46$      & $3.47$      & $3.19$      & $3.45$      & $3.27$      \\
                                                  & DistEGNN-$10$               &  $2.66$     & $3.04$      & $3.03$      & $3.35$      & $3.34$      & $3.15$      & $3.46$      & $3.12$      \\
\midrule
    \multirow{4}{*}{Relative training time}       & EGNN~\citep{satorras2021en} &  $1.00$     & $0.37$      & $0.21$      & $0.18$      & $0.16$      & $0.16$      & $0.16$      & $0.17$      \\
                                                  & DistEGNN-$1$                &  $1.69$     & $0.91$      & $0.66$      & $0.60$      & $0.53$      & $0.52$      & $0.52$      & $0.54$      \\
                                                  & DistEGNN-$3$                &  $1.97$     & $1.06$      & $0.76$      & $0.64$      & $0.58$      & $0.56$      & $0.55$      & $0.57$      \\
                                                  & DistEGNN-$10$               &  $2.77$     & $1.51$      & $1.05$      & $0.86$      & $0.75$      & $0.70$      & $0.68$      & $0.66$      \\
\midrule
    \multirow{4}{*}{Relative inference time}      & EGNN~\citep{satorras2021en} &  $1.00$     & $0.37$      & $0.21$      & $0.18$      & $0.17$      & $0.16$      & $0.17$      & $0.17$      \\
                                                  & DistEGNN-$1$                &  $1.35$     & $0.72$      & $0.54$      & $0.52$      & $0.48$      & $0.47$      & $0.47$      & $0.46$      \\
                                                  & DistEGNN-$3$                &  $1.51$     & $0.79$      & $0.60$      & $0.53$      & $0.49$      & $0.48$      & $0.49$      & $0.49$      \\
                                                  & DistEGNN-$10$               &  $1.81$     & $0.98$      & $0.73$      & $0.62$      & $0.56$      & $0.54$      & $0.53$      & $0.52$      \\
\midrule
    \multirow{4}{*}{Device memory usage (GB)}    & EGNN~\citep{satorras2021en} &  $74.15$    & $17.23$     & $5.63$      & $2.60$      & $1.90$      & $1.14$      & $0.68$       & $0.50$       \\
                                                  & DistEGNN-$1$                &  $75.66$    & $17.93$     & $6.11$      & $3.05$      & $2.12$      & $1.34$      & $1.00$       & $0.76$       \\
                                                  & DistEGNN-$3$                &  $78.09$    & $19.32$     & $6.42$      & $4.31$      & $3.36$      & $2.45$      & $1.70$       & $1.40$       \\
                                                  & DistEGNN-$10$               &  $84.26$    & $27.63$     & $13.30$     & $10.66$     & $10.63$     & $7.15$      & $7.31$       & $4.89$       \\
\midrule
    \multirow{4}{*}{Total memory usage (GB)}      & EGNN~\citep{satorras2021en} &  $74.15$    & $34.45$     & $16.88$     & $10.40$     & $9.52$      & $6.86$      & $4.76$       & $3.98$       \\
                                                  & DistEGNN-$1$                &  $75.66$    & $35.86$     & $18.33$     & $12.22$     & $10.59$     & $8.07$      & $7.00$       & $6.08$       \\
                                                  & DistEGNN-$3$                &  $78.09$    & $38.65$     & $19.25$     & $17.22$     & $16.80$     & $14.70$     & $11.89$      & $11.21$      \\
                                                  & DistEGNN-$10$               &  $84.26$    & $55.26$     & $39.90$     & $42.64$     & $53.16$     & $42.91$     & $51.16$      & $39.10$      \\
\bottomrule
\end{tabular}
}
\end{table*}

\begin{table*}[ht!]
\centering
\caption{Performance of EGNN~\citep{satorras2021en} and DistEGNN on Fluid113K dataset using the fixed cutoff radius for graph construction.}
\label{tab:distribute_largefluid_same_radius}
\adjustbox{width=\textwidth, center, padding = 2pt}{
\begin{tabular}{
    p{0.19\textwidth}<{\raggedright}
    p{0.115\textwidth}<{\raggedright}
    p{0.075\textwidth}<{\centering}
    p{0.075\textwidth}<{\centering}
    p{0.075\textwidth}<{\centering}
    p{0.075\textwidth}<{\centering}
    p{0.075\textwidth}<{\centering}
    p{0.075\textwidth}<{\centering}
    p{0.075\textwidth}<{\centering}
    p{0.075\textwidth}<{\centering}
}
\toprule
                                                  &                             &  $1$-device & $2$-devices & $3$-devices & $4$-devices & $5$-devices & $6$-devices & $7$-devices & $8$-devices   \\
\midrule
    \multirow{3}{*}{Graph information}            & Cutoff radius               &  $0.075$    & $0.075$     & $0.075$     & $0.075$     & $0.075$     & $0.075$     & $0.075$     & $0.075$     \\
                                                  & Edge count                  &  $1706973$  & $426430$    & $189639$    & $106615$    & $68233$     & $47387$     & $34851$     & $26676$     \\
                                                  & Average degree              &  $15.09$    & $3.76$      & $1.68$      & $0.94$      & $0.60$      & $0.41$      & $0.31$      & $0.24$      \\
\midrule
    \multirow{2}{*}{MSE ($\times 10^{-2}$)}       & EGNN~\citep{satorras2021en} &  $17.15$    & $22.22$     & $24.79$     & $26.26$     & $27.50$     & $28.34$     & $29.43$     & $29.02$     \\
                                                  & DistEGNN-$5$                &  $8.63$     & $9.22$      & $9.23$      & $9.50$      & $10.46$     & $10.06$     & $10.08$     & $9.98$      \\
\midrule 
    \multirow{2}{*}{Relative training time}       & EGNN~\citep{satorras2021en} & $1.00$      & $0.35$      & $0.21$      & $0.17$      & $0.14$      & $0.14$      & $0.15$      & $0.15$      \\
                                                  & DistEGNN-$5$                & $6.10$      & $2.96$      & $1.95$      & $1.44$      & $1.15$      & $1.00$      & $0.90$      & $0.83$      \\
\midrule 
    \multirow{2}{*}{Relative inference time}      & EGNN~\citep{satorras2021en} & $1.00$      & $0.33$      & $0.22$      & $0.19$      & $0.15$      & $0.15$      & $0.16$      & $0.17$      \\
                                                  & DistEGNN-$5$                & $1.64$      & $0.69$      & $0.48$      & $0.42$      & $0.36$      & $0.35$      & $0.35$      & $0.36$      \\
\midrule
    \multirow{2}{*}{Device memory usage (GB)}    & EGNN~\citep{satorras2021en} & $41.55$     & $15.00$     & $6.20$      & $4.00$      & $2.02$      & $1.22$      & $0.69$       & $0.47$     \\
                                                  & DistEGNN-$5$                & $47.97$     & $15.89$     & $9.37$      & $4.19$      & $3.09$      & $3.02$      & $2.11$       & $1.58$     \\
\midrule
    \multirow{2}{*}{Total memory usage (GB)}      & EGNN~\citep{satorras2021en} & $41.55$     & $30.01$     & $18.57$     & $15.98$     & $10.09$     & $7.32$      & $4.82$       & $3.77$     \\
                                                  & DistEGNN-$5$                & $47.97$     & $31.77$     & $28.11$     & $16.77$     & $15.47$     & $18.12$     & $14.80$      & $12.62$    \\
\bottomrule
\end{tabular}
}
\end{table*}

\subsubsection{Rollout Experiment}

We further evaluate the performance of DistEGNN and EGNN on rollout experiments under the distributed setting. %
We conduct this experiment on the test set of Fluid113K dataset with 8 devices collaboratively simulate a large graph. Results are visualized in~\cref{fig:Fluid113K-Rollout}. 

As shown in the figure, the predictions of DistEGNN-5 align more closely with the ground truth compared to those of EGNN.
This is particularly evident in the upper region of the box, where DistEGNN shows significantly higher accuracy in capturing the shape of the splashing water, resulting in a more faithful representation of the fluid’s state after impacting the container.
In contrast, EGNN’s results tend to accumulate at the bottom of the box, with many particles dispersing beyond the container boundaries, failing to accurately reflect the underlying physical process.

\begin{figure*}[t!]
    \centering
    \includegraphics[width=\linewidth]{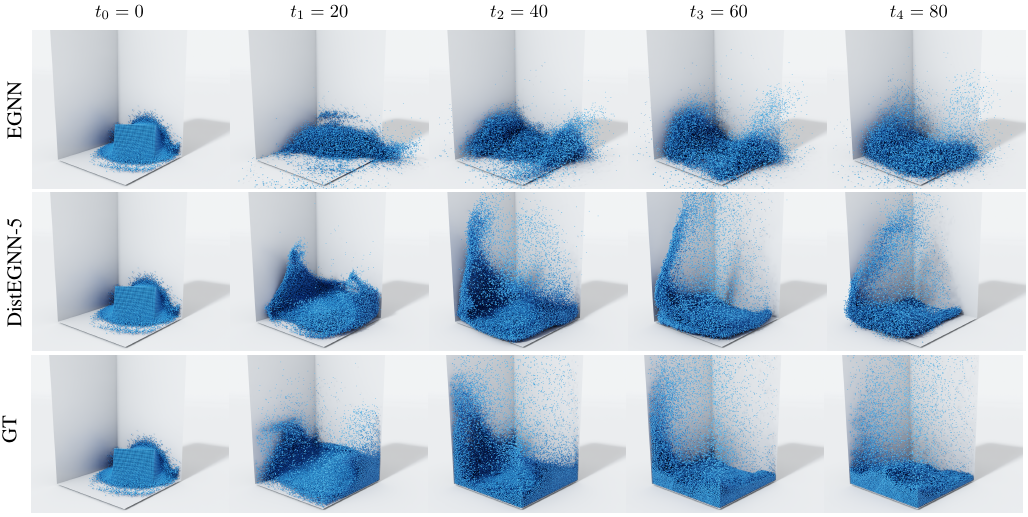}
    \vspace{-18pt}
    \caption{Rollout results of EGNN and DistEGNN-5 on Fluid113K dataset, with training and inference performed in the distributed environment with 8 devices.}
    \label{fig:Fluid113K-Rollout}
\end{figure*}

\subsubsection{Ablation Studies}
\label{sec:dist_ablation}

\textbf{Graph partition strategy.} The choice of graph partitioning strategy could affect model performance in distributed settings. By default, we adopt a random partitioning strategy, where each node is randomly assigned to one of the computing devices with equal probability. This method is simple to implement, computationally efficient, and allows flexible adjustment of assignment probabilities when there is performance variation among devices.

We also investigate the impact of alternative partitioning strategies on the Water-3D dataset, but observe no significant performance differences. For instance, METIS~\cite{metis} is a widely used graph partitioning algorithm. It seeks to preserve as many internal edges as possible while ensuring that each partition contains a balanced number of nodes. We initially expected that such a method would reduce the information loss caused by partitioning. However, our experiments show that applying METIS to the Water-3D dataset does not yield noticeable performance gains, as reported in~\cref{tab:metis}. 

\begin{table}[t!]
\centering
\caption{MSEs of EGNN and DistEGNN under random and METIS partitioning methods.}
\label{tab:metis}
\adjustbox{width=0.49\textwidth, center, padding = 2pt}{
\begin{tabular}
{
  p{0.18\textwidth}<{\raggedright}
  ccc
}
\toprule
                                        &  $2$-devices  &  $3$-devices  &  $4$-devices  \\
\midrule 
\multicolumn{4}{c}{Graph Attributes}\\
\midrule
        Edges (random)                  &  $23711$      &  $10548$      &  $5930$     \\   
        Edges (METIS)                   &  $24336$      &  $11004$      &  $6282$     \\
\midrule
\multicolumn{4}{c}{MSE Results}\\
\midrule
        EGNN (random)                   &  $8.04$       &  $8.98$       &  $10.16$    \\
        EGNN (METIS~\cite{metis})       &  $7.77$       &  $9.06$       &  $10.57$    \\
\midrule
        DistEGNN-3 (random)             &  $3.23$       &  $3.26$       &  $3.46$     \\
        DistEGNN-3 (METIS~\cite{metis}) &  $3.37$       &  $3.05$       &  $3.04$     \\ 
\bottomrule
\end{tabular}
}
\end{table}

This may be attributed to the absence of clear community structures or localized density in the dataset, resulting in a similar edge retention rate between the METIS and random partitioning strategies. In addition, compared to the random strategy, METIS introduces additional computational overhead, which increases the setup time before distributed training. Therefore, considering partitioning quality and system efficiency, we continue to use the random partitioning strategy as the default in our experiments. In the context of large-scale graph data with clear structural patterns, we believe that further exploration of structure-aware partitioning methods remains essential for improving the efficiency and performance of distributed training.

\textbf{Cutoff radius.} By default, we use a fixed cutoff radius to construct graphs on each device. Although this significantly reduces the number of edges and benefits model execution speed, it also poses a considerable challenge to the model’s expressiveness. To avoid this potential performance loss, we explore increasing the cutoff radius in distributed graphs to keep more connections.

\begin{table*}[ht!]
\centering
\caption{Performance of EGNN~\citep{satorras2021en} and DistEGNN on Water-3D dataset with dynamic cutoff raduis for graph construction.}
\label{tab:distribute_water3d_same_degree}
\adjustbox{width=\textwidth, center, padding = 2pt}{
\begin{tabular}{
    p{0.19\textwidth}<{\raggedright}
    p{0.115\textwidth}<{\raggedright}
    p{0.075\textwidth}<{\centering}
    p{0.075\textwidth}<{\centering}
    p{0.075\textwidth}<{\centering}
    p{0.075\textwidth}<{\centering}
    p{0.075\textwidth}<{\centering}
    p{0.075\textwidth}<{\centering}
    p{0.075\textwidth}<{\centering}
    p{0.075\textwidth}<{\centering}
}
\toprule
                                                  &                             &  $1$-device & $2$-devices & $3$-devices & $4$-devices & $5$-devices & $6$-devices & $7$-devices & $8$-devices   \\
\midrule
    \multirow{3}{*}{Graph information}            & Cutoff radius               &  $0.035$    & $0.045$     & $0.052$     & $0.057$     & $0.062$     & $0.066$     & $0.070$     & $0.074$     \\
                                                  & Edge count                  &  $94931$    & $95535$     & $94155$     & $93576$     & $95576$     & $93137$     & $93276$     & $95576$     \\
                                                  & Average degree              &  $12.16$    & $12.24$     & $12.06$     & $11.99$     & $12.24$     & $11.93$     & $11.95$     & $12.24$     \\
\midrule
    \multirow{3}{*}{MSE ($\times 10^{-4}$)}       & DistEGNN-$1$                &  $3.25$     & $3.18$      & $3.47$      & $3.30$      & $3.43$      & $3.32$      & $3.33$      & $3.27$      \\
                                                  & DistEGNN-$3$                &  $2.58$     & $2.76$      & $3.01$      & $2.93$      & $3.15$      & $3.10$      & $2.96$      & $3.03$      \\
                                                  & DistEGNN-$10$               &  $2.66$     & $2.83$      & $3.04$      & $2.88$      & $3.16$      & $2.89$      & $2.94$      & $2.96$      \\
\midrule
\multirow{3}{*}{Relative training time}           & DistEGNN-$1$                & $1.69$      & $1.18$      & $0.92$      & $0.79$      & $0.70$      & $0.65$      & $0.62$      & $0.61$      \\
                                                  & DistEGNN-$3$                & $1.97$      & $1.31$      & $1.00$      & $0.85$      & $0.74$      & $0.70$      & $0.66$      & $0.64$      \\
                                                  & DistEGNN-$10$               & $2.77$      & $1.76$      & $1.31$      & $1.09$      & $0.93$      & $0.84$      & $0.80$      & $0.75$      \\
\midrule
\multirow{3}{*}{Relative inference time}          & DistEGNN-$1$                & $1.35$      & $0.93$      & $0.76$      & $0.68$      & $0.62$      & $0.58$      & $0.56$      & $0.57$      \\
                                                  & DistEGNN-$3$                & $1.51$      & $0.97$      & $0.79$      & $0.72$      & $0.64$      & $0.60$      & $0.57$      & $0.58$      \\
                                                  & DistEGNN-$10$               & $1.81$      & $1.18$      & $0.93$      & $0.80$      & $0.71$      & $0.66$      & $0.64$      & $0.62$      \\
\midrule
    \multirow{3}{*}{Device memory usage (GB)}     & DistEGNN-$1$                & $75.66$     & $35.61$     & $21.42$     & $15.02$     & $13.78$     & $7.57$      & $8.53$       & $6.54$       \\
                                                  & DistEGNN-$3$                & $78.09$     & $36.71$     & $22.21$     & $15.60$     & $14.29$     & $8.02$      & $8.85$       & $6.84$       \\
                                                  & DistEGNN-$10$               & $84.26$     & $38.33$     & $22.76$     & $15.48$     & $15.13$     & $8.98$      & $10.62$      & $6.93$       \\
\midrule
    \multirow{3}{*}{Total memory usage (GB)}      & DistEGNN-$1$                & $75.66$     & $71.22$     & $64.25$     & $60.08$     & $69.11$     & $45.47$     & $59.68$      & $52.35$      \\
                                                  & DistEGNN-$3$                & $78.09$     & $73.42$     & $66.64$     & $62.41$     & $71.44$     & $48.10$     & $61.95$      & $54.71$      \\
                                                  & DistEGNN-$10$               & $84.26$     & $76.66$     & $68.30$     & $61.90$     & $75.68$     & $53.85$     & $74.36$      & $55.41$      \\

\bottomrule
\end{tabular}
}
\end{table*}

We conduct a new set of experiments, in which the cutoff radius is dynamically adjusted during multi-device parallel execution. Specifically, when running on multiple devices, we increase the cutoff radius in steps of 0.001 until the total edge count in the distributed graph approximates that of the single-device graph. Under the new radius, we retrain DistEGNN on the Water-3D dataset using 2 to 8 devices, with results reported in~\cref{tab:distribute_water3d_same_degree}. 
As the table shows, dynamically increasing the cutoff radius effectively alleviates the performance drop under multi-device parallel training. Notably, DistEGNN-1 experienced a maximum performance drop of only 6.77\%, with the MSE increasing slightly from 3.25 to 3.47. When running on 8 devices, the MSE increases by merely 0.02, yet yielding a 2.77 times improvement in training speed.
For a direct comparison, we visualize the results of the dynamic radius setting on the Water-3D dataset in~\cref{fig:distribute_main}(b), using the same coordinate scale as in~\cref{fig:distribute_main}(a). Comparing the two results, we observe that using a fixed radius in the distributed setting achieves higher speedup, while the dynamic radius setting delivers better predictive accuracy. These findings suggest that dynamically finetuning the cutoff radius for graph construction enables DistEGNN to effectively balance computational efficiency and prediction performance.

\section{Conclusion}

We propose FastEGNN and DistEGNN, two advanced models that leverage virtual nodes to operate efficiently and effectively on large sparse geometric graphs.
The core insight of FastEGNN lies in constructing an ordered set of virtual nodes that enjoys both mutual distinctiveness and global distributedness to perform expressive message passing. 
DistEGNN, the distributed version of FastEGNN, enables multiple devices collaboratively process an extremely large graph, while simultaneously leverage a shared set of virtual nodes to transfer global messages across devices. This further extends the model’s capability and scalability.
Comprehensive evaluations on both small graph with 100 nodes and large scale graph with more than 100K nodes consistently demonstrate the superiority of FastEGNN and DistEGNN in terms of achieving remarkably lower simulation error and offering substantial efficiency improvements due to sparsification and distribution.

\section*{Acknowledgements}
This work was jointly supported by the following projects: the National Natural Science Foundation of China (No. 62376276); Beijing Nova Program (No. 20230484278); the Fundamental Research Funds for the Central Universities, and the Research Funds of Renmin University of China (23XNKJ19); Public Computing Cloud, Renmin University of China.

{
\small
\bibliographystyle{IEEEtran}
\bibliography{Reference}

% Generated by IEEEtran.bst, version: 1.14 (2015/08/26)
\begin{thebibliography}{10}
\providecommand{\url}[1]{#1}
\csname url@samestyle\endcsname
\providecommand{\newblock}{\relax}
\providecommand{\bibinfo}[2]{#2}
\providecommand{\BIBentrySTDinterwordspacing}{\spaceskip=0pt\relax}
\providecommand{\BIBentryALTinterwordstretchfactor}{4}
\providecommand{\BIBentryALTinterwordspacing}{\spaceskip=\fontdimen2\font plus
\BIBentryALTinterwordstretchfactor\fontdimen3\font minus \fontdimen4\font\relax}
\providecommand{\BIBforeignlanguage}[2]{{%
\expandafter\ifx\csname l@#1\endcsname\relax
\typeout{** WARNING: IEEEtran.bst: No hyphenation pattern has been}%
\typeout{** loaded for the language `#1'. Using the pattern for}%
\typeout{** the default language instead.}%
\else
\language=\csname l@#1\endcsname
\fi
#2}}
\providecommand{\BIBdecl}{\relax}
\BIBdecl

\bibitem{bronstein2021geometric}
M.~M. Bronstein, J.~Bruna, T.~Cohen, and P.~Veli{\v{c}}kovi{\'c}, ``Geometric deep learning: Grids, groups, graphs, geodesics, and gauges,'' \emph{arXiv preprint arXiv:2104.13478}, 2021.

\bibitem{han2024survey}
J.~Han, J.~Cen, L.~Wu, Z.~Li, X.~Kong, R.~Jiao, Z.~Yu, T.~Xu, F.~Wu, Z.~Wang \emph{et~al.}, ``A survey of geometric graph neural networks: data structures, models and applications,'' \emph{Frontiers of Computer Science}, vol.~19, no.~11, p. 1911375, 2025.

\bibitem{duval2023hitchhiker}
A.~Duval, S.~V. Mathis, C.~K. Joshi, V.~Schmidt, S.~Miret, F.~D. Malliaros, T.~Cohen, P.~Lio, Y.~Bengio, and M.~Bronstein, ``A hitchhiker's guide to geometric gnns for 3d atomic systems,'' \emph{arXiv preprint arXiv:2312.07511}, 2023.

\bibitem{zhang2023artificial}
X.~Zhang, L.~Wang, J.~Helwig, Y.~Luo, C.~Fu, Y.~Xie, M.~Liu, Y.~Lin, Z.~Xu, K.~Yan, K.~Adams, M.~Weiler, X.~Li, T.~Fu, Y.~Wang, A.~Strasser, H.~Yu, Y.~Xie, X.~Fu, S.~Xu, Y.~Liu, Y.~Du, A.~Saxton, H.~Ling, H.~Lawrence, H.~St{\"a}rk, S.~Gui, C.~Edwards, N.~Gao, A.~Ladera, T.~Wu, E.~F. Hofgard, A.~M. Tehrani, R.~Wang, A.~Daigavane, M.~Bohde, J.~Kurtin, Q.~Huang, T.~Phung, M.~Xu, C.~K. Joshi, S.~V. Mathis, K.~Azizzadenesheli, A.~Fang, A.~Aspuru-Guzik, E.~Bekkers, M.~Bronstein, M.~Zitnik, A.~Anandkumar, S.~Ermon, P.~Li{\`o}, R.~Yu, S.~G{\"u}nnemann, J.~Leskovec, H.~Ji, J.~Sun, R.~Barzilay, T.~Jaakkola, C.~W. Coley, X.~Qian, X.~Qian, T.~Smidt, and S.~Ji, ``Artificial intelligence for science in quantum, atomistic, and continuum systems,'' \emph{arXiv preprint arXiv:2307.08423}, 2023.

\bibitem{wu2023equivariant}
L.~Wu, Z.~Hou, J.~Yuan, Y.~Rong, and W.~Huang, ``Equivariant spatio-temporal attentive graph networks to simulate physical dynamics,'' in \emph{Thirty-seventh Conference on Neural Information Processing Systems}, 2023.

\bibitem{xu2024equivariant}
M.~Xu, J.~Han, A.~Lou, J.~Kossaifi, A.~Ramanathan, K.~Azizzadenesheli, J.~Leskovec, S.~Ermon, and A.~Anandkumar, ``Equivariant graph neural operator for modeling 3d dynamics,'' in \emph{Forty-first International Conference on Machine Learning}, 2024.

\bibitem{watson2023novo}
J.~L. Watson, D.~Juergens, N.~R. Bennett, B.~L. Trippe, J.~Yim, H.~E. Eisenach, W.~Ahern, A.~J. Borst, R.~J. Ragotte, L.~F. Milles \emph{et~al.}, ``De novo design of protein structure and function with rfdiffusion,'' \emph{Nature}, vol. 620, no. 7976, pp. 1089--1100, 2023.

\bibitem{ingraham2023illuminating}
J.~B. Ingraham, M.~Baranov, Z.~Costello, K.~W. Barber, W.~Wang, A.~Ismail, V.~Frappier, D.~M. Lord, C.~Ng-Thow-Hing, E.~R. Van~Vlack \emph{et~al.}, ``Illuminating protein space with a programmable generative model,'' \emph{Nature}, vol. 623, no. 7989, pp. 1070--1078, Nov 2023.

\bibitem{satorras2021en}
V.~G. Satorras, E.~Hoogeboom, and M.~Welling, ``E (n) equivariant graph neural networks,'' in \emph{International Conference on Machine Learning}.\hskip 1em plus 0.5em minus 0.4em\relax PMLR, 2021, pp. 9323--9332.

\bibitem{darve2000fast}
E.~Darve, ``The fast multipole method: numerical implementation,'' \emph{Journal of Computational Physics}, vol. 160, no.~1, pp. 195--240, 2000.

\bibitem{zhang2024improving}
Y.~Zhang, J.~Cen, J.~Han, Z.~Zhang, J.~ZHOU, and W.~Huang, ``Improving equivariant graph neural networks on large geometric graphs via virtual nodes learning,'' in \emph{Forty-first International Conference on Machine Learning}, 2024.

\bibitem{borgwardt2006integrating}
K.~M. Borgwardt, A.~Gretton, M.~J. Rasch, H.-P. Kriegel, B.~Sch{\"o}lkopf, and A.~J. Smola, ``Integrating structured biological data by kernel maximum mean discrepancy,'' \emph{Bioinformatics}, vol.~22, no.~14, pp. e49--e57, 2006.

\bibitem{schutt2018schnet}
K.~T. Sch{\"u}tt, H.~E. Sauceda, P.-J. Kindermans, A.~Tkatchenko, and K.-R. M{\"u}ller, ``Schnet--a deep learning architecture for molecules and materials,'' \emph{The Journal of Chemical Physics}, vol. 148, no.~24, p. 241722, 2018.

\bibitem{Klicpera2020Directional}
J.~Klicpera, J.~Groß, and S.~Günnemann, ``Directional message passing for molecular graphs,'' in \emph{International Conference on Learning Representations}, 2020.

\bibitem{thomas2018tensor}
N.~Thomas, T.~Smidt, S.~Kearnes, L.~Yang, L.~Li, K.~Kohlhoff, and P.~Riley, ``Tensor field networks: Rotation-and translation-equivariant neural networks for 3d point clouds,'' \emph{arXiv preprint arXiv:1802.08219}, 2018.

\bibitem{brandstetter2022geometric}
J.~Brandstetter, R.~Hesselink, E.~van~der Pol, E.~J. Bekkers, and M.~Welling, ``Geometric and physical quantities improve e(3) equivariant message passing,'' in \emph{International Conference on Learning Representations}, 2022.

\bibitem{fuchs2020se}
F.~Fuchs, D.~Worrall, V.~Fischer, and M.~Welling, ``Se(3)-transformers: 3d roto-translation equivariant attention networks,'' in \emph{Annual Conference on Neural Information Processing Systems}, vol.~33, 2020.

\bibitem{kohler2019equivariant}
J.~K{\"o}hler, L.~Klein, and F.~No{\'e}, ``Equivariant flows: sampling configurations for multi-body systems with symmetric energies,'' \emph{arXiv preprint arXiv:1910.00753}, 2019.

\bibitem{jing2021learning}
B.~Jing, S.~Eismann, P.~Suriana, R.~J.~L. Townshend, and R.~Dror, ``Learning from protein structure with geometric vector perceptrons,'' in \emph{International Conference on Learning Representations}, 2021.

\bibitem{cen2024high}
J.~Cen, A.~Li, N.~Lin, Y.~Ren, Z.~Wang, and W.~Huang, ``Are high-degree representations really unnecessary in equivariant graph neural networks?'' in \emph{Annual Conference on Neural Information Processing Systems}, 2024.

\bibitem{gilmer2017neural}
J.~Gilmer, S.~S. Schoenholz, P.~F. Riley, O.~Vinyals, and G.~E. Dahl, ``Neural message passing for quantum chemistry,'' in \emph{International Conference on Machine Learning}, 2017.

\bibitem{pham2017graph}
T.~Pham, T.~Tran, H.~Dam, and S.~Venkatesh, ``Graph classification via deep learning with virtual nodes,'' \emph{arXiv preprint arXiv:1708.04357}, 2017.

\bibitem{hwang2022analysis}
E.~Hwang, V.~Thost, S.~S. Dasgupta, and T.~Ma, ``An analysis of virtual nodes in graph neural networks for link prediction,'' in \emph{The First Learning on Graphs Conference}, 2022.

\bibitem{han2022learning}
J.~Han, W.~Huang, H.~Ma, J.~Li, J.~B. Tenenbaum, and C.~Gan, ``Learning physical dynamics with subequivariant graph neural networks,'' in \emph{Advances in Neural Information Processing Systems}, 2022.

\bibitem{kong2023conditional}
X.~Kong, W.~Huang, and Y.~Liu, ``Conditional antibody design as 3d equivariant graph translation,'' in \emph{The Eleventh International Conference on Learning Representations}, 2023.

\bibitem{ma2022graph}
H.~Ma, Y.~Rong, and J.~Huang, ``Graph neural networks: Scalability,'' \emph{Graph Neural Networks: Foundations, Frontiers, and Applications}, pp. 99--119, 2022.

\bibitem{ding2024scalable}
H.~Ding, Z.~Wei, and Y.~Ye, ``Scalable and effective graph neural networks via trainable random walk sampling,'' \emph{IEEE Transactions on Knowledge and Data Engineering}, vol.~37, no.~2, pp. 896--909, 2025.

\bibitem{md2021distgnn}
V.~Md, S.~Misra, G.~Ma, R.~Mohanty, E.~Georganas, A.~Heinecke, D.~Kalamkar, N.~K. Ahmed, and S.~Avancha, ``Distgnn: Scalable distributed training for large-scale graph neural networks,'' in \emph{Proceedings of the International Conference for High Performance Computing, Networking, Storage and Analysis}, 2021, pp. 1--14.

\bibitem{huang2022equivariant}
W.~Huang, J.~Han, Y.~Rong, T.~Xu, F.~Sun, and J.~Huang, ``Equivariant graph mechanics networks with constraints,'' in \emph{International Conference on Learning Representations}, 2022.

\bibitem{villar2021scalars}
S.~Villar, D.~W. Hogg, K.~Storey-Fisher, W.~Yao, and B.~Blum-Smith, ``Scalars are universal: Equivariant machine learning, structured like classical physics,'' in \emph{Annual Conference on Neural Information Processing Systems}, 2021.

\bibitem{metis}
G.~Karypis and V.~Kumar, ``Metis—a software package for partitioning unstructured graphs, partitioning meshes and computing fill-reducing ordering of sparse matrices,'' 01 1997.

\bibitem{kipf2018neural}
T.~Kipf, E.~Fetaya, K.-C. Wang, M.~Welling, and R.~Zemel, ``Neural relational inference for interacting systems,'' in \emph{International Conference on Machine Learning}, 2018.

\bibitem{han2022equivariant}
J.~Han, W.~Huang, T.~Xu, and Y.~Rong, ``Equivariant graph hierarchy-based neural networks,'' in \emph{Annual Conference on Neural Information Processing Systems}, 2022.

\bibitem{gowers2016mdanalysis}
{R}ichard {J}.~{G}owers, {M}ax {L}inke, {J}onathan {B}arnoud, {T}yler {J}. {E}.~{R}eddy, {M}anuel {N}.~{M}elo, {S}ean {L}.~{S}eyler, {J}an {D}omański, {D}avid {L}.~{D}otson, {S}ébastien {B}uchoux, {I}an {M}.~{K}enney, and {O}liver {B}eckstein, ``{M}{D}{A}nalysis: {A} {P}ython {P}ackage for the {R}apid {A}nalysis of {M}olecular {D}ynamics {S}imulations,'' in \emph{{P}roceedings of the 15th {P}ython in {S}cience {C}onference}, {S}ebastian {B}enthall and {S}cott {R}ostrup, Eds., 2016, pp. 98 -- 105.

\bibitem{seyler5108170molecular}
S.~Seyler and O.~Beckstein, ``Molecular dynamics trajectory for benchmarking mdanalysis,'' p.~m9, 2017.

\bibitem{sanchez2020learning}
A.~Sanchez-Gonzalez, J.~Godwin, T.~Pfaff, R.~Ying, J.~Leskovec, and P.~Battaglia, ``Learning to simulate complex physics with graph networks,'' in \emph{International Conference on Machine Learning}.\hskip 1em plus 0.5em minus 0.4em\relax PMLR, 2020, pp. 8459--8468.

\bibitem{Ummenhofer2020Lagrangian}
B.~Ummenhofer, L.~Prantl, N.~Thuerey, and V.~Koltun, ``Lagrangian fluid simulation with continuous convolutions,'' in \emph{International Conference on Learning Representations}, 2020.

\bibitem{SPlisHSPlasHLibrary}
\BIBentryALTinterwordspacing
J.~Bender \emph{et~al.}, ``{SPlisHSPlasH Library}.'' [Online]. Available: \url{https://github.com/InteractiveComputerGraphics/SPlisHSPlasH}
\BIBentrySTDinterwordspacing

\end{thebibliography}
}

\onecolumn
\appendix
\subsection{Proof}
\begin{theorem}[Propostion~\ref{prop:equofnn}]\label{th:equofnn}
If the initialization of the virtual nodes satisfies~\cref{eq:eqconstraint-vn}, then after~\cref{eq:mij,eq:mv,eq:miv,eq:xr(l+1),eq:hr(l+1),eq:Xv(l+1),eq:Hv(l+1)}, the output coordinates $\vec\mX^{(L)}$ are E(3)-equivariant and permutation-equivaraint, the virtual coordinates $\vec\mZ^{(L)}$ are E(3)-equivariant and permutation-invariant, with respect to the input $\vec\mX^{(0)}$.
\end{theorem}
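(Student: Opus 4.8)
The plan is to verify equivariance/invariance layer by layer, by induction on $l$, showing that the required transformation properties are preserved by each of the five update components in~\cref{eq:mij,eq:mv,eq:miv,eq:xr(l+1),eq:hr(l+1),eq:Xv(l+1),eq:Hv(l+1)}. The inductive hypothesis at layer $l$ is: under the joint transformation $\vec\gvx_i^{(l)}\mapsto\mO\vec\gvx_i^{(l)}+\gvt$ (with $i\mapsto\pi(i)$ a permutation of real nodes), $\vh_i^{(l)}$ is invariant (and permutation-equivariant), $\vec\gvz_c^{(l)}\mapsto\mO\vec\gvz_c^{(l)}+\gvt$ with $\vec\gmZ^{(l)}$ permutation-\emph{invariant} w.r.t.\ the real-node permutation, and $\vs_c^{(l)}$ invariant. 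The base case $l=0$ is exactly the hypothesis on $\varphi_{\mathrm{init}}$ in~\cref{eq:eqconstraint-vn}, together with the fact that the observed inputs $\vec\gmX^{(0)},\mH^{(0)}$ carry the stated symmetry by assumption.

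For the inductive step I would proceed component by component. First, $\|\vec\gvx_i^{(l)}-\vec\gvx_j^{(l)}\|^2$ is invariant since $\mO$ is orthogonal and the translation cancels in the difference, so $\vm^r_{ij}$ from~\cref{eq:mij} is invariant; likewise $\bar\vx=\frac1N\sum_i\vec\gvx_i^{(l)}\mapsto\mO\bar\vx+\gvt$, hence $\vec\gmZ^{(l)}-\bar\vx\bm1^\top\mapsto\mO(\vec\gmZ^{(l)}-\bar\vx\bm1^\top)$, and $\vm^v=(\vec\gmZ^{(l)}-\bar\vx\bm1^\top)^\top(\vec\gmZ^{(l)}-\bar\vx\bm1^\top)$ in~\cref{eq:mv} is invariant because $\mO^\top\mO=\mI$. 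Consequently $\|\vec\gvx_i^{(l)}-\vec\gvz_c^{(l)}\|^2$ is invariant and $\vm^v_{ic}$ in~\cref{eq:miv} is invariant. Second, the real coordinate update~\cref{eq:xr(l+1)} is a sum of terms of the form (invariant scalar)$\times(\vec\gvx_i^{(l)}-\vec\gvx_j^{(l)})$, (invariant scalar)$\times(\vec\gvx_i^{(l)}-\vec\gvz_c^{(l)})$, and (invariant scalar)$\times\vec\gvv_i^{(0)}$, added to $\vec\gvx_i^{(l)}$ itself; each difference transforms by $\mO$ only (translation cancels) and $\vec\gvv_i^{(0)}\mapsto\mO\vec\gvv_i^{(0)}$, so the whole increment transforms by $\mO$, and adding it to $\vec\gvx_i^{(l)}\mapsto\mO\vec\gvx_i^{(l)}+\gvt$ preserves the affine form — giving E(3)-equivariance. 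Invariance of $\vm^r_{ij},\vm^v_{ic}$ immediately gives invariance of $\vh_i^{(l+1)}$ in~\cref{eq:hr(l+1)}. Third, the virtual updates~\cref{eq:Xv(l+1),eq:Hv(l+1)} are handled identically: $\sum_i(\vec\gvz_c^{(l)}-\vec\gvx_i^{(l)})\varphi_Z(\vm^v_{ic})$ transforms by $\mO$, added to $\vec\gvz_c^{(l)}$ it stays affine, and $\vs_c^{(l+1)}$ is built from invariants.

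For permutation: real-node quantities are indexed by $i$ and each update of $\vec\gvx_i,\vh_i$ depends only on node $i$'s own data, its neighbor set $\gN(i)$ (which permutes along with $i$), and the (permutation-agnostic) virtual set, so real nodes are permutation-equivariant; the virtual updates aggregate symmetrically via $\frac1N\sum_{i=1}^N$ over all real nodes, which is permutation-invariant, so $\vec\gmZ^{(l+1)},\mS^{(l+1)}$ remain permutation-invariant. Iterating the step $L$ times closes the induction and yields the claim for $\vec\gmX^{(L)},\vec\gmZ^{(L)}$. The main obstacle — and the one point that genuinely needs care rather than mechanical bookkeeping — is confirming that the translation $\gvt$ consistently cancels in \emph{every} argument fed to the MLPs: this requires that all coordinate information enters only through differences or through the center-subtracted form $\vec\gmZ^{(l)}-\bar\vx\bm1^\top$, and that the velocity term $\varphi_v(\vh_i^{(l)})\vec\gvv_i^{(0)}$ uses $\vec\gvv_i^{(0)}$, which is explicitly stated to be unaffected by $\gvt$; I would state this cancellation as a small preliminary lemma and then invoke it uniformly, rather than re-deriving it for each of~\cref{eq:mij,eq:mv,eq:miv,eq:xr(l+1),eq:hr(l+1),eq:Xv(l+1),eq:Hv(l+1)}.
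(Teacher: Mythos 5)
Your proof is correct and takes essentially the same route as the paper's: you verify that every scalar message (distances, the centered Gram matrix $\vm^v$, and the feature updates) is E(3)-invariant, that each coordinate update is a linear combination of difference vectors (and the velocity) with invariant scalar weights added to the current coordinate, and that symmetric pooling over real nodes yields the stated permutation behavior, then propagate this through the $L$ layers. The paper packages the cross-layer step as a general composition-of-equivariant/invariant-functions observation instead of an explicit induction, but the substance is identical; your explicit inductive hypothesis and translation-cancellation bookkeeping are, if anything, a more careful rendering of the same argument.
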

\begin{proof}
Consider a sequence composed of functions $\{\phi_i:\gX^{(i-1)}\rightarrow\gX^{(i)}\}_{i=1}^N$ equivariant to a same group $G$, the equivariance lead to an interesting property that  
\begin{equation*}
    \phi_N\circ\cdots\circ\phi_{i+1}\circ \rho_{\gX^{(i)}}(g)\phi_{i}\circ\cdots\circ\phi_1=\phi_N\circ\cdots\circ\phi_{j+1}\circ \rho_{\gX^{(j)}}(g)\phi_{j}\circ\cdots\circ\phi_1,
\end{equation*}
holds for all $i,j=1,2,\dots,N$ and $g\in G$, which means that the group elements $g$ can be freely exchanged in the composite sequence of equivariant functions. In particular, if one of the equivariant functions (\emph{e.g.} $\phi_k$) is replaced by an invariant function, the group element $g$ will be absorbed, that means 
\begin{equation*}
    \phi_N\circ\cdots\circ\phi_k\circ\cdots\circ\phi_{i+1}\circ \rho_{\gX^{(i)}}(g)\phi_{i}\circ\cdots\circ\phi_1=\phi_N\circ\cdots\circ\phi_1.
\end{equation*}
holds for all $g\in G$ but only $i=1,2,\dots,k$. Although $\phi_N\circ\cdots\circ\phi_k$ is still equivariant, because the group elements must be input starting from $\phi_1$, the overall $\phi_N\circ\cdots\circ\phi_1$ is still an invariant function.

Since \cref{eq:eqconstraint-vn} is the definition of a function with $\mathrm{E}(3)$-equivariance and permutation-invaraince, and the pooling operators like \texttt{sum} or \texttt{mean} maintain permutation-equivariance of the whole model. Now to prove the $\mathrm{E}(3)$-equivarianceof the whole model, we choose to give a stronger conclusion, namely each of \cref{eq:mij,eq:mv,eq:miv,eq:hr(l+1),eq:Hv(l+1)} is an $\mathrm{E}(3)$-invariant function, and each of \cref{eq:xr(l+1),eq:Xv(l+1)} is an $\mathrm{E}(3)$-equivariant function.

It is obvious that each input in \cref{eq:mij,eq:mv,eq:miv,eq:hr(l+1),eq:Hv(l+1)} is either a constant or an inner product term, thus ensuring the $\mathrm{E}(3)$-invariance of each function. And each of \cref{eq:xr(l+1),eq:Xv(l+1)} is linear combination of three-dimensional vectors with an $\mathrm{E}(3)$-invariant weight, thus ensuring the $\mathrm{E}(3)$-equivariance of each function.
\end{proof}

\begin{theorem}\label{th:transformation}
The $\mathrm{E}(3)$-equivaraint function $f(\vec\vx_i, \vec\mZ)$ in \cref{prop:xZ} can be decomposed into an $\mathrm{O}(3)$-equivaraint and translation-invariant function of $\vec\mZ-\vec\vx_i$, and the addition of $\vec\vx_i$.
\end{theorem}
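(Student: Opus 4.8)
The goal is to show that the update map $f(\vec\vx_i,\vec\mZ)$ characterized in \cref{prop:xZ} factors as $f(\vec\vx_i,\vec\mZ)=\vec\vx_i+g(\vec\mZ-\vec\vx_i\bm 1^\top)$, where $g$ is $\mathrm{O}(3)$-equivariant and invariant under a common translation of all its columns. The plan is to start from the explicit form already established in \cref{prop:xZ}, namely $f(\vec\vx_i,\vec\mZ)=\vec\vx_i+\sum_{c=1}^C(\vec\vz_c-\vec\vx_i)\psi_c\!\left(\bigoplus_{c=1}^C\|\vec\vz_c-\vec\vx_i\|^2,\vm^v\right)$, and simply read off the decomposition: the summand $\sum_c(\vec\vz_c-\vec\vx_i)\psi_c(\cdots)$ depends on the $\vec\vz_c$ only through the differences $\vec\vz_c-\vec\vx_i$, so defining $\vec\mW\coloneqq\vec\mZ-\vec\vx_i\bm 1^\top$ with columns $\vec\vw_c=\vec\vz_c-\vec\vx_i$ gives $f(\vec\vx_i,\vec\mZ)=\vec\vx_i+g(\vec\mW)$ with $g(\vec\mW)\coloneqq\sum_{c=1}^C\vec\vw_c\,\psi_c(\bigoplus_c\|\vec\vw_c\|^2,\vm^v)$.

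Next I would verify the two claimed symmetries of $g$ separately. For translation invariance: a common shift $\vec\vw_c\mapsto\vec\vw_c$ is exactly what $\vec\mZ-\vec\vx_i\bm 1^\top$ already absorbs, but the statement to check is that $g$ does not further depend on the absolute position; since $g$ is written purely in terms of the $\vec\vw_c$ and their norms, and since $\vm^v$ by \cref{eq:mv} is built from the center-translated virtual coordinates $\vec\mZ-\bar\vx\bm 1^\top$ (hence itself translation-invariant), $g$ is manifestly a function of the $\vec\vw_c$ alone — no residual dependence on $\vec\vx_i$ or a global origin remains. For $\mathrm{O}(3)$-equivariance: under $\vec\vw_c\mapsto\mO\vec\vw_c$ for an orthogonal $\mO$, each norm $\|\vec\vw_c\|^2$ is preserved and $\vm^v$ (an inner-product Gram-type matrix) is preserved, so every scalar $\psi_c(\cdots)$ is unchanged, while the vector prefactors transform as $\vec\vw_c\mapsto\mO\vec\vw_c$; linearity then yields $g(\mO\vec\mW)=\mO\,g(\vec\mW)$.

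Finally I would assemble these observations: $f(\vec\vx_i,\vec\mZ)$ is the sum of the identity term $\vec\vx_i$ and the map $g$ applied to $\vec\mZ-\vec\vx_i\bm 1^\top$, and $g$ has been shown to be $\mathrm{O}(3)$-equivariant and translation-invariant (in the columns-shifted-together sense), which is precisely the asserted decomposition. I would also note for completeness that this is consistent with full $\mathrm{E}(3)$-equivariance of $f$: a global rotation $\mO$ and translation $\vec t$ send $\vec\vx_i\mapsto\mO\vec\vx_i+\vec t$ and $\vec\vz_c\mapsto\mO\vec\vz_c+\vec t$, so $\vec\mW\mapsto\mO\vec\mW$ and $f\mapsto\mO\vec\vx_i+\vec t+\mO g(\vec\mW)=\mO f(\vec\vx_i,\vec\mZ)+\vec t$.

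The argument is essentially bookkeeping once \cref{prop:xZ} is granted, so there is no serious obstacle; the one point requiring a little care is confirming that $\vm^v$ contributes no hidden dependence on the absolute frame — this rests on the fact (from \cref{eq:mv} and Lemma~2 of~\cite{huang2022equivariant}) that $\vm^v$ is the Gram matrix of center-translated coordinates and is therefore both translation-invariant and $\mathrm{O}(3)$-invariant, so folding it into $g$ does not break either symmetry. If one instead wanted a from-scratch proof not invoking \cref{prop:xZ}, the harder step would be re-deriving the scalarization form, i.e. showing any such $f$ must be a $\psi_c$-weighted linear combination of the $\vec\vw_c$; but since \cref{prop:xZ} is already in hand, that work is unnecessary here.
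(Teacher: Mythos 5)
Your verification that $g(\vec\mW)=\sum_{c}\vec\vw_c\,\psi_c(\cdots)$ is O(3)-equivariant and carries no residual dependence on the absolute frame is fine as far as it goes, but the overall argument is circular relative to how this theorem functions in the paper. \cref{th:transformation} is the \emph{first step} of the paper's proof of \cref{prop:xZ} (restated as \cref{theo:universal}): the chain there is ``by \cref{th:transformation}, $f(\vec\vx_i,\vec\mZ)=h(\vec\mZ-\vec\vx_i)+\vec\vx_i$ for some O(3)-equivariant $h$; then by \cref{le:inv} this $h$ scalarizes as $\vec\mW\sigma(\vec\mW^\top\vec\mW)$; then \cref{th:xzlocal-global} converts the Gram matrix into the $\bigoplus_c\|\vec\vz_c-\vec\vx_i\|^2$ and $\vm^v$ inputs.'' You instead take the final form of \cref{prop:xZ} as ``already in hand'' and read the decomposition off from it --- that is, you assume the very conclusion this lemma is needed to establish. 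Spliced into the paper's deduction, your proof would make the argument for \cref{prop:xZ} collapse.

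The intended (and much shorter) proof uses nothing but translation equivariance of the as-yet-uncharacterized update function $f$: choosing the translation $\vec\vt=-\vec\vx_i$ gives $f(\vec\vx_i,\vec\mZ)-\vec\vx_i=f(\vec\vx_i-\vec\vx_i,\vec\mZ-\vec\vx_i)=f(\vec{\bm{0}},\vec\mZ-\vec\vx_i)$, so one simply defines $h(\cdot)\coloneqq f(\vec{\bm{0}},\cdot)$; O(3)-equivariance of $h$ is inherited from that of $f$ because orthogonal maps fix the origin. Your closing remark that a from-scratch proof would require ``re-deriving the scalarization form'' is exactly the misdiagnosis: no scalarization is needed for this statement at all, and the hard work you defer to \cref{prop:xZ} is precisely what this theorem, together with \cref{le:span}, \cref{le:inv}, and \cref{th:xzlocal-global}, is assembled to prove afterwards.
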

\begin{proof}
Since $f(\vec\vx_i, \vec\mZ)$ is equivaraint to translation $\vec\vt$, considering $\vec\vt=-\vec\vx_i$, we get
\begin{equation*}
    f(\vec\vx_i, \vec\mZ)=f(\vec\vx_i, \vec\mZ)-\vec\vx_i+\vec\vx_i=f(\vec\vx_i-\vec\vx_i, \vec\mZ-\vec\vx_i)+\vec\vx_i=f(\vec{\bm{0}}, \vec\mZ-\vec\vx_i)+\vec\vx_i\coloneqq h(\vec\mZ-\vec\vx_i)+\vec\vx_i.
\end{equation*}
\end{proof}

\begin{lemma}\label{le:span}
For any O(3)-equivariant function $\hat{f}(\Vec{\mZ})$, it must fall into the subspace spanned by the columns of $\Vec{\mZ}$, namely, there exists a function $s(\Vec{\mZ})$, satisfying $\hat{f}(\Vec{\mZ})=\Vec{\mZ}s(\Vec{\mZ})$.
\end{lemma}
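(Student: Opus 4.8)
The plan is to prove Lemma~\ref{le:span}, which asserts that any $\mathrm{O}(3)$-equivariant function $\hat f: \R^{3\times C}\to\R^3$ takes the form $\hat f(\vec\mZ)=\vec\mZ\,s(\vec\mZ)$ for some $\R^C$-valued coefficient function $s$. The underlying idea is that the only $\mathrm{O}(3)$-equivariant way to produce a $3$-vector out of a collection of $3$-vectors $\vec\vz_1,\dots,\vec\vz_C$ is to take a linear combination of them, with coefficients that are themselves $\mathrm{O}(3)$-invariant (hence functions only of the Gram matrix $\vec\mZ^\top\vec\mZ$). This is a standard consequence of the first fundamental theorem of invariant theory for the orthogonal group / the classical characterization of equivariant polynomials, but here I would give a direct elementary argument rather than invoking the full machinery.

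First I would reduce to the generic case where the columns of $\vec\mZ$ span all of $\R^3$ (i.e.\ $\mathrm{rank}(\vec\mZ)=3$); the degenerate case $\mathrm{rank}(\vec\mZ)\le 2$ is handled afterwards by a limiting/continuity argument or by restricting to the relevant subspace. In the generic case, decompose $\R^3=\mathrm{col}(\vec\mZ)\oplus\mathrm{col}(\vec\mZ)^\perp$; since $\mathrm{rank}=3$ the orthogonal complement is trivial, so already $\hat f(\vec\mZ)\in\mathrm{col}(\vec\mZ)=\R^3$ trivially — that is not yet informative, so instead the correct move is: when $\mathrm{rank}(\vec\mZ)=r<3$, consider the stabilizer of $\mathrm{col}(\vec\mZ)$ inside $\mathrm{O}(3)$, which contains the reflection $\mO$ fixing $\mathrm{col}(\vec\mZ)$ pointwise and acting as $-\mathrm{Id}$ on the complement. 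Equivariance gives $\mO\hat f(\vec\mZ)=\hat f(\mO\vec\mZ)=\hat f(\vec\mZ)$, forcing the component of $\hat f(\vec\mZ)$ in $\mathrm{col}(\vec\mZ)^\perp$ to vanish, hence $\hat f(\vec\mZ)\in\mathrm{col}(\vec\mZ)$, i.e.\ $\hat f(\vec\mZ)=\vec\mZ\,s(\vec\mZ)$ for some $s(\vec\mZ)$ (uniquely determined when the columns are independent, otherwise choose any solution). This step establishes the claimed \emph{form}; it remains to note that this covers all ranks $r\le 3$, since for $r=3$ any vector lies in $\mathrm{col}(\vec\mZ)=\R^3$ and we may still write $\hat f(\vec\mZ)=\vec\mZ\,s(\vec\mZ)$ by solving the (now underdetermined but consistent) linear system.

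Next I would record that $s$ can be taken $\mathrm{O}(3)$-invariant, which is what one actually uses downstream: applying equivariance with arbitrary $\mO$ gives $\vec\mZ\,s(\mO\vec\mZ)=\mO^\top\hat f(\mO\vec\mZ)=\mO^\top\mO\,\hat f(\vec\mZ)=\vec\mZ\,s(\vec\mZ)$, so whenever the columns are independent $s(\mO\vec\mZ)=s(\vec\mZ)$; by Proposition~\ref{prop:xX}-type reasoning (or directly) an $\mathrm{O}(3)$-invariant function of $\vec\mZ$ factors through the Gram matrix $\vec\mZ^\top\vec\mZ$, matching the role of $\vec\mm^v$ in~\cref{eq:mv}. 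For the rank-deficient locus, continuity of $\hat f$ (implicitly assumed, as it is throughout the paper for MLP-parameterized maps) lets us extend the identity $\hat f(\vec\mZ)=\vec\mZ\,s(\vec\mZ)$ from the dense open set of full-column-rank configurations by taking limits, choosing a continuous selection of $s$ on that open set.

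The main obstacle I anticipate is the bookkeeping around \emph{non-uniqueness} of $s$ when $C>3$ or when columns are linearly dependent: the equation $\hat f(\vec\mZ)=\vec\mZ\,s(\vec\mZ)$ does not pin down $s$, so one must either (i) be content with asserting existence of \emph{some} measurable/continuous $s$, or (ii) canonicalize, e.g.\ take $s(\vec\mZ)=\vec\mZ^{+}\hat f(\vec\mZ)$ using the Moore--Penrose pseudoinverse, and then check that this particular choice is still $\mathrm{O}(3)$-invariant and as regular as $\hat f$. I would go with route (ii) since the pseudoinverse is $\mathrm{O}(3)$-equivariant in the appropriate sense ($(\mO\vec\mZ)^+=\vec\mZ^+\mO^\top$), which makes the invariance of $s$ immediate and sidesteps any selection-theorem subtleties; the only thing to verify is that $\vec\mZ^+\hat f(\vec\mZ)\in\R^C$ depends on $\vec\mZ$ only through $\vec\mZ^\top\vec\mZ$, which follows since $\vec\mZ^+=(\vec\mZ^\top\vec\mZ)^+\vec\mZ^\top$ and $\vec\mZ^\top\hat f(\vec\mZ)$ is an $\mathrm{O}(3)$-invariant $\R^C$-vector, hence itself a function of the Gram matrix by the same reflection argument applied coordinatewise. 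This makes the lemma self-contained and ready to feed into the proof of Proposition~\ref{prop:xZ}.
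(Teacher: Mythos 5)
Your argument is correct and is essentially the paper's own proof: both hinge on the reflection $\mO\in\mathrm{O}(3)$ that fixes the column space of $\vec\mZ$ pointwise and negates its orthogonal complement, so that equivariance forces the complement component of $\hat f(\vec\mZ)$ to vanish. The extra material you add on choosing $s$ via the Moore--Penrose pseudoinverse and on its $\mathrm{O}(3)$-invariance and regularity is sound and goes beyond what the paper records, but it does not change the core of the argument.
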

\begin{proof}
The proof is given by~\cite{villar2021scalars}. Essentially, suppose $\Vec{\mZ}^{\perp}$ is the orthogonal complement of the column space of $\Vec{\mZ}$. Then there must exit functions $s(\Vec{\mZ})$ and $s^{\perp}(\Vec{\mZ})$, satisfying $\hat{f}(\Vec{\mZ})=\Vec{\mZ} s(\Vec{\mZ}) + \Vec{\mZ}^{\perp}s^{\perp}(\Vec{\mZ})$. We can always find an orthogonal transformation $\mO$ allowing $\mO\Vec{\mZ}=\Vec{\mZ}$ while $\mO\Vec{\mZ}^{\perp}=-\Vec{\mZ}^{\perp}$. With this transformation $\mO$, we have $\hat{f}(\mO\Vec{\mZ})=\hat{f}(\Vec{\mZ})=\Vec{\mZ} s(\Vec{\mZ}) + \Vec{\mZ}^{\perp}s^{\perp}(\Vec{\mZ})$, and $\mO\hat{f}(\Vec{\mZ})=\Vec{\mZ} s(\Vec{\mZ}) - \Vec{\mZ}^{\perp}s^{\perp}(\Vec{\mZ})$. The equivariance property of $\hat{f}$ implies $\Vec{\mZ} s(\Vec{\mZ}) + \Vec{\mZ}^{\perp}s^{\perp}(\Vec{\mZ})=\Vec{\mZ} s(\Vec{\mZ}) - \Vec{\mZ}^{\perp}s^{\perp}(\Vec{\mZ})$, which derives $s^{\perp}(\Vec{\mZ})=0$. Hence, the proof is concluded. 
\end{proof}

\begin{lemma}\label{le:inv}
If the O(3)-equivariant function $\hat{f}(\Vec{\mZ})$ lies in the subspace spanned by the columns of $\Vec{\mZ}$, then there exists a function $\sigma$ satisfying $\hat{f}(\Vec{\mZ}) =\Vec{\mZ}\sigma(\Vec{\mZ}^\top\mZ)$.
\end{lemma}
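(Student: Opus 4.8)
\emph{Proof proposal.} The plan is to build on Lemma~\ref{le:span}, which already supplies a function $s$ with $\hat{f}(\Vec{\mZ})=\Vec{\mZ}\,s(\Vec{\mZ})$, so that the task reduces to showing $s$ may be taken to depend on $\Vec{\mZ}$ only through the Gram matrix $\Vec{\mZ}^\top\Vec{\mZ}$. First I would extract an $\mathrm{O}(3)$-invariant quantity from $\hat{f}$: set $g(\Vec{\mZ})\coloneqq\Vec{\mZ}^\top\hat{f}(\Vec{\mZ})\in\R^{C}$. Using the $\mathrm{O}(3)$-equivariance of $\hat{f}$, $g(\mO\Vec{\mZ})=(\mO\Vec{\mZ})^\top\hat{f}(\mO\Vec{\mZ})=\Vec{\mZ}^\top\mO^\top\mO\,\hat{f}(\Vec{\mZ})=\Vec{\mZ}^\top\hat{f}(\Vec{\mZ})=g(\Vec{\mZ})$ for every $\mO\in\mathrm{O}(3)$, so $g$ is $\mathrm{O}(3)$-invariant.

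The crucial ingredient is then that any $\mathrm{O}(3)$-invariant function of $\Vec{\mZ}$ factors through $\Vec{\mZ}^\top\Vec{\mZ}$. I would establish this via the orbit characterization: two matrices $A,B\in\R^{3\times C}$ satisfy $A^\top A=B^\top B$ if and only if $B=\mO A$ for some $\mO\in\mathrm{O}(3)$. The ``if'' direction is immediate; for ``only if'' one checks that the assignment sending the $i$-th column of $A$ to the $i$-th column of $B$ is a well-defined linear isometry from $\mathrm{col}(A)$ onto $\mathrm{col}(B)$ (it preserves all pairwise inner products, which is exactly the content of $A^\top A=B^\top B$), and since $\mathrm{col}(A)$ and $\mathrm{col}(B)$ are subspaces of $\R^{3}$ of the same dimension this isometry extends to an element of $\mathrm{O}(3)$. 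Hence the level sets of $\Vec{\mZ}\mapsto\Vec{\mZ}^\top\Vec{\mZ}$ are precisely the $\mathrm{O}(3)$-orbits, so $g$, being constant on orbits, factors as $g(\Vec{\mZ})=\tau(\Vec{\mZ}^\top\Vec{\mZ})$ for some $\tau$.

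Finally I would assemble the pieces. Left-multiplying $\hat{f}(\Vec{\mZ})=\Vec{\mZ}\,s(\Vec{\mZ})$ by $\Vec{\mZ}^\top$ gives $\Vec{\mZ}^\top\Vec{\mZ}\,s(\Vec{\mZ})=\tau(\Vec{\mZ}^\top\Vec{\mZ})$. When $\Vec{\mZ}$ has full column rank, $\Vec{\mZ}^\top\Vec{\mZ}$ is invertible, so $s(\Vec{\mZ})=(\Vec{\mZ}^\top\Vec{\mZ})^{-1}\tau(\Vec{\mZ}^\top\Vec{\mZ})$ and thus $\hat{f}(\Vec{\mZ})=\Vec{\mZ}\,\sigma(\Vec{\mZ}^\top\Vec{\mZ})$ with $\sigma(M)\coloneqq M^{-1}\tau(M)$. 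For the general, possibly rank-deficient case I would instead use the Moore--Penrose pseudoinverse: since $\hat{f}(\Vec{\mZ})\in\mathrm{col}(\Vec{\mZ})$ by hypothesis, $\hat{f}(\Vec{\mZ})=\Vec{\mZ}\Vec{\mZ}^{+}\hat{f}(\Vec{\mZ})=\Vec{\mZ}(\Vec{\mZ}^\top\Vec{\mZ})^{+}\Vec{\mZ}^\top\hat{f}(\Vec{\mZ})=\Vec{\mZ}(\Vec{\mZ}^\top\Vec{\mZ})^{+}\tau(\Vec{\mZ}^\top\Vec{\mZ})$, so $\sigma(M)\coloneqq M^{+}\tau(M)$ works. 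The main obstacle is the second step, the orthogonal-group version of the first fundamental theorem of invariant theory; it is the one non-elementary input, and if one additionally wanted $\sigma$ to inherit the regularity (e.g.\ continuity) of $\hat{f}$, the discontinuity of the pseudoinverse across rank strata would force extra care in the rank-deficient case, though restricting to the open dense full-rank locus suffices when $C\le 3$, which is the regime relevant to our construction.
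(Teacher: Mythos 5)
Your proposal is correct, and it follows the same skeleton as the paper's proof --- reduce to the statement that an $\mathrm{O}(3)$-invariant function of $\Vec{\mZ}$ factors through the Gram matrix $\Vec{\mZ}^\top\Vec{\mZ}$ --- but where the paper simply defers that key input to Corollary~2 and Lemmas~1--2 of~\cite{huang2022equivariant}, you prove it in full via the orbit characterization ($A^\top A=B^\top B$ iff $B=\mO A$ for some $\mO\in\mathrm{O}(3)$), which is a standard and correctly executed argument. Your route also differs in a useful detail: rather than asserting that the coefficient function $s$ in $\hat{f}(\Vec{\mZ})=\Vec{\mZ}\,s(\Vec{\mZ})$ can be \emph{chosen} invariant (which is not automatic when $\Vec{\mZ}$ is rank-deficient, since $s$ is then non-unique), you extract the manifestly invariant quantity $g(\Vec{\mZ})=\Vec{\mZ}^\top\hat{f}(\Vec{\mZ})$ and recover $\hat{f}$ canonically through the pseudoinverse identity $\Vec{\mZ}\Vec{\mZ}^{+}=\Vec{\mZ}(\Vec{\mZ}^\top\Vec{\mZ})^{+}\Vec{\mZ}^\top$, yielding the explicit formula $\sigma(M)=M^{+}\tau(M)$. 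What the paper's citation-based proof buys is brevity; what yours buys is self-containedness and a cleaner treatment of the degenerate case, at the cost of invoking (and proving a special case of) the first fundamental theorem for $\mathrm{O}(3)$. Your closing caveat about continuity of $\sigma$ across rank strata is a fair observation but not a gap, since the lemma as stated imposes no regularity on $\sigma$.
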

\begin{proof}
The proof is provided by Corollary 2 in~\cite{huang2022equivariant}. The basic idea is that $\hat{f}(\Vec{\mZ})$ can be transformed to $\hat{f}(\Vec{\mZ})=\Vec{\mZ}\eta(\Vec{\mZ})$ where $\eta(\Vec{\mZ})$ is O(3)-invariant. According to Lemma 1-2 in~\cite{huang2022equivariant}, $\eta(\Vec{\mZ})$ must be written as $\eta(\Vec{\mZ})=\sigma(\Vec{\mZ}^\top\Vec{\mZ})$, which completes the proof. 
\end{proof}

\begin{theorem}\label{th:xzlocal-global}
There exists a surjection from $dao$ to $(\vec\mZ-\vec\vx_i)^\top (\vec\mZ-\vec\vx_i)$, that is
\begin{equation*}
    \exists \text{~a funcion}\ \zeta,\ (\vec\mZ-\vec\vx_i)^\top (\vec\mZ-\vec\vx_i)=\zeta\left(\bigoplus_{c=1}^C \|\vec\vz_c-\vec\vx_i\|^2, (\vec\mZ-\bar\vx)^\top(\vec\mZ-\bar\vx)\right).
\end{equation*}
\end{theorem}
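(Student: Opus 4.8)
The plan is to produce $\zeta$ in closed form and then observe that surjectivity onto the feasible targets is immediate. Write $\vec\mZ=[\vec\vz_1,\dots,\vec\vz_C]$, let $p_c:=\|\vec\vz_c-\vec\vx_i\|^2$ (so the first argument is $\bigoplus_{c=1}^C p_c$), and let $G_{cc'}$ denote the $(c,c')$ entry of the second argument $(\vec\mZ-\bar\vx)^\top(\vec\mZ-\bar\vx)$, i.e.\ $G_{cc'}=\langle\vec\vz_c-\bar\vx,\vec\vz_{c'}-\bar\vx\rangle$. The target has $(c,c')$ entry $D_{cc'}:=\langle\vec\vz_c-\vec\vx_i,\vec\vz_{c'}-\vec\vx_i\rangle$. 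The only apparent difficulty is that $D$ is centered at $\vec\vx_i$ whereas $G$ is centered at $\bar\vx$, so I will pass through the pairwise squared distances $\|\vec\vz_c-\vec\vz_{c'}\|^2$, which are translation-invariant and hence indifferent to the choice of center.

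First I would invoke the polarization identity $\langle a,b\rangle=\tfrac12(\|a\|^2+\|b\|^2-\|a-b\|^2)$ with $a=\vec\vz_c-\vec\vx_i$ and $b=\vec\vz_{c'}-\vec\vx_i$, so that $a-b=\vec\vz_c-\vec\vz_{c'}$ and
\[
D_{cc'}=\tfrac12\bigl(p_c+p_{c'}-\|\vec\vz_c-\vec\vz_{c'}\|^2\bigr).
\]
Next I would apply the same identity with $a=\vec\vz_c-\bar\vx$ and $b=\vec\vz_{c'}-\bar\vx$ (again $a-b=\vec\vz_c-\vec\vz_{c'}$) to rewrite the bridging term using only $G$:
\[
\|\vec\vz_c-\vec\vz_{c'}\|^2=G_{cc}+G_{c'c'}-2G_{cc'}.
\]
Substituting gives $D_{cc'}=\tfrac12(p_c+p_{c'})-\tfrac12(G_{cc}+G_{c'c'})+G_{cc'}$, an explicit (indeed affine) function of the two inputs, which one checks also reproduces $D_{cc}=p_c$ on the diagonal. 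Defining $\zeta$ to be exactly the map sending $\bigl(\bigoplus_{c}p_c,\,G\bigr)$ to the $C\times C$ matrix whose $(c,c')$ entry is $\tfrac12(p_c+p_{c'})-\tfrac12(G_{cc}+G_{c'c'})+G_{cc'}$ then yields the claimed identity for every configuration.

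Finally I would settle the surjectivity claim. Restricted to the set of feasible arguments — those arising from some $\vec\mZ$, $\vec\vx_i$, $\bar\vx$ — $\zeta$ attains every feasible target, because any admissible matrix $(\vec\mZ-\vec\vx_i)^\top(\vec\mZ-\vec\vx_i)$ is realized by some configuration, and that very configuration furnishes an argument that $\zeta$ sends onto it. The whole argument is elementary; if anything is the crux, it is the opening observation that the mismatch between the two centers is harmless, because the pairwise squared distances among the $\vec\vz_c$ do not depend on where one centers. Once that is in hand, there is no analytic obstacle — only two applications of polarization and a substitution.
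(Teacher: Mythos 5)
Your proof is correct and arrives at exactly the same affine identity as the paper, namely $\mA_{mn}=\mD_{mn}+\tfrac12(\mY_m-\mD_{mm})+\tfrac12(\mY_n-\mD_{nn})$; the only cosmetic difference is that you reach it by polarizing through the translation-invariant pairwise distances $\|\vec\vz_c-\vec\vz_{c'}\|^2$, whereas the paper expands the inner product directly after recentering at $\bar\vx$. This is essentially the same argument, so nothing further is needed.
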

\begin{proof}
First, we simplify the notation as
\begin{equation*}
    \mA\coloneqq(\vec\mZ-\vec\vx_i)^\top (\vec\mZ-\vec\vx_i),\quad \mD\coloneqq(\vec\mZ-\bar\vx)^\top(\vec\mZ-\bar\vx),\quad \mY\coloneqq\bigoplus_{c=1}^C \|\vec\vz_c-\vec\vx_i\|^2.
\end{equation*}
Since $\vec\mZ-\vec\vx_i=(\vec\mZ-\bar\vx)-(\vec\vx_i-\bar\vx)$, letting $\vec\mZ'=\vec\mZ-\bar\vx$, $\vec\vx'=\vec\vx_i-\bar\vx$, $\vec\vz'_c=\vec\mZ'_{:,c}$, we have
\begin{equation*}
    \mA=(\vec\mZ'-\vec\vx')^\top(\vec\mZ'-\vec\vx'),\quad \mD=\vec\mZ'^\top\vec\mZ',\quad \mY = \bigoplus_{c=1}^C \|\vec\vz'_c-\vec\vx'\|^2.
\end{equation*}
Thus, we find the following equations:
\begin{equation*}
\begin{aligned}
\mA_{mn}&=\left\langle
    \vec\vz'_m-\vec\vx',\ \vec\vz'_n-\vec\vx'
\right\rangle\\
&=\left\langle
    \vec\vz'_m,\ \vec\vz'_n
\right\rangle
+\frac{1}{2}\left(
    \left\langle
        \vec\vx',\ \vec\vx'-2\vec\vz'_m
    \right\rangle
    +\left\langle
        \vec\vx',\ \vec\vx'-2\vec\vz'_n
    \right\rangle
\right)\\
&=\left\langle
    \vec\vz'_m,\ \vec\vz'_n
\right\rangle
+\frac{1}{2}\left(\left(\|\vec\vz'_m-\vec\vx'\|^2-\|\vec\vz'_m\|^2\right)+\left(\|\vec\vz'_n-\vec\vx'\|^2-\|\vec\vz'_n\|^2\right)\right)\\
&=\mD_{mn}+\frac{1}{2}\left((\mY_{m}-\mD_{mm})+(\mY_{n}-\mD_{nn})\right),
\end{aligned}
\end{equation*}
and
\begin{align*}
        \mY_{m}&=\mA_{mm},\\
        \mD_{mn}&=\mA_{mn}-\frac{1}{2}(\mA_{mm}+\mA_{nn})+\frac{1}{2}(\mD_{mm}+\mD_{nn}),
\end{align*}
which means there is a a surjection from $\bigoplus_{c=1}^C \|\vec\vz_c-\vec\vx_i\|^2, (\vec\mZ-\bar\vx)^\top(\vec\mZ-\bar\vx)$ to $(\vec\mZ-\vec\vx_i)^\top (\vec\mZ-\vec\vx_i)$.

\end{proof}

\begin{theorem}[Propostion~\ref{prop:xZ}]\label{theo:universal}
The update function must take the form $f(\vec\vx_i, \vec\mZ)=\vec\vx_i+\sum_{c=1}^C (\vec\vz_c-\vec\vx_i)\psi_c\left(\bigoplus_{c=1}^C \|\vec\vz_c-\vec\vx_i\|^2, \vm^v\right)$, where $\psi_c:\R^{C+C^2}\rightarrow\R$ is an arbitrary non-linear function, and $\vm^v$ is an E(3)-invariant term by~\cref{eq:miv}.
\end{theorem}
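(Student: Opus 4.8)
The plan is to obtain the claimed form by composing, in order, the four auxiliary results already established above: \cref{th:transformation}, \cref{le:span}, \cref{le:inv}, and \cref{th:xzlocal-global}, peeling off one symmetry constraint at each stage, and then checking the reverse direction (that every function of the stated form is admissible) as an immediate observation.

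First I would apply \cref{th:transformation} to reduce the two-argument update to a single-argument one, writing $f(\vec\vx_i,\vec\mZ)=h(\vec\mZ-\vec\vx_i)+\vec\vx_i$ with $h$ an $\mathrm{O}(3)$-equivariant, translation-independent function of $\vec\mZ-\vec\vx_i$. The decisive point here is that, because $\vec\mZ$ is an \emph{ordered} set by the construction of~\cref{sec:vr}, $h$ is \emph{not} required to be invariant under permutations of its $C$ columns --- this is exactly the freedom that distinguishes the present statement from \cref{prop:xX}, and it is what will ultimately permit a distinct scalar weight $\psi_c$ per virtual channel.

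Next I would invoke \cref{le:span} to force $h(\vec\mZ-\vec\vx_i)$ into the column span of $\vec\mZ-\vec\vx_i$, i.e.\ $h(\vec\mZ-\vec\vx_i)=\sum_{c=1}^C(\vec\vz_c-\vec\vx_i)\,s_c(\vec\mZ-\vec\vx_i)$ for scalar maps $s_c$, and then \cref{le:inv} to upgrade the dependence of the coefficients to the Gram matrix, yielding $h(\vec\mZ-\vec\vx_i)=(\vec\mZ-\vec\vx_i)\,\sigma\!\bigl((\vec\mZ-\vec\vx_i)^\top(\vec\mZ-\vec\vx_i)\bigr)$ for some $\sigma:\R^{C\times C}\to\R^{C}$. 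Finally, \cref{th:xzlocal-global} supplies a surjection from $\bigl(\bigoplus_{c=1}^C\|\vec\vz_c-\vec\vx_i\|^2,\ (\vec\mZ-\bar\vx)^\top(\vec\mZ-\bar\vx)\bigr)$ onto $(\vec\mZ-\vec\vx_i)^\top(\vec\mZ-\vec\vx_i)$; identifying $(\vec\mZ-\bar\vx)^\top(\vec\mZ-\bar\vx)$ with $\vm^v$ of~\cref{eq:mv} and absorbing the surjection into $\sigma$, each coefficient becomes $\psi_c\!\bigl(\bigoplus_{c=1}^C\|\vec\vz_c-\vec\vx_i\|^2,\ \vm^v\bigr)$ with $\psi_c:\R^{C+C^2}\to\R$. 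Substituting back into $f=h+\vec\vx_i$ gives the asserted expression, and conversely any such $f$ is manifestly $\mathrm{E}(3)$-equivariant in both arguments, so the characterization is tight.

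I expect the only genuine obstacle to be the bookkeeping around the center-of-mass shift in the last step: one must verify that rewriting $\vec\mZ-\vec\vx_i$ as $(\vec\mZ-\bar\vx)-(\vec\vx_i-\bar\vx)$ and then invoking \cref{th:xzlocal-global} is legitimate (it is, since $\bar\vx$ is itself $\mathrm{E}(3)$-equivariant and permutation-invariant in the real nodes, so it introduces no new symmetry), and that what is actually needed is the \emph{surjectivity} of that map --- not mere well-definedness --- so that every $\sigma$ produced by \cref{le:inv} is realizable as some family $\{\psi_c\}$ of the stated arguments. A minor remark worth including is the case distinction $C<3$ versus $C\ge 3$ in \cref{le:span}: the span representation is valid in both regimes, so the argument proceeds unchanged.
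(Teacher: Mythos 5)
Your proposal is correct and follows essentially the same route as the paper: Theorem~\ref{th:transformation} to strip the translation, Lemma~\ref{le:inv} (via Lemma~\ref{le:span}) to reduce to a scalar function of the Gram matrix $(\vec\mZ-\vec\vx_i)^\top(\vec\mZ-\vec\vx_i)$, and Theorem~\ref{th:xzlocal-global} to re-express that Gram matrix through $\bigoplus_{c=1}^C\|\vec\vz_c-\vec\vx_i\|^2$ and $\vm^v$. Your added remarks on surjectivity, the center-of-mass bookkeeping, and the converse direction are sound refinements of the same argument rather than a different approach.
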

\begin{proof}
According to \cref{th:transformation}, there exists an $\mathrm{O}(3)$-equivaraint and transform-invariant function $h$, thus
\begin{equation*}
    f(\vec\vx_i, \vec\mZ)=h(\vec\mZ-\vec\vx_i)+\vec\vx_i.
\end{equation*}
Then, from Lemma~\ref{le:inv}, we rewrite the $\mathrm{O}(3)$-equivarint $h$ into a product of the vector $\vec\mZ-\vec\vx_i$ and a scalar function $\eta$ with an inner-product input, that is, 
\begin{equation*}
    f(\vec\vx_i, \vec\mZ)=(\vec\mZ-\vec\vx_i)\eta\left((\vec\mZ-\vec\vx_i)^\top(\vec\mZ-\vec\vx_i)\right)+\vec\vx_i.
\end{equation*}
Finally, with \cref{th:xzlocal-global}, we transform it into a more expressive function as
\begin{equation*}
    f(\vec\vx_i, \vec\mZ)=(\vec\mZ-\vec\vx_i)\psi\left(\bigoplus_{c=1}^C \|\vec\vz_c-\vec\vx_i\|^2, (\vec\mZ-\bar\vx)^\top(\vec\mZ-\bar\vx)\right)+\vec\vx_i,
\end{equation*}
which is equivalent to \cref{eq:miv}.
\end{proof}

\newpage
\subsection{Experiment Details}
\subsubsection{Dataset Details}
\begin{table}[H]
\centering
\caption{Basic statistics of the four datasets, with graph sizes increasing progressively. Among them, Fluid113K is our generated large-scale fluid simulation dataset, where each graph contains over 100K nodes and an average of 1.7M edges.}
\label{tab:dataset}
\renewcommand\arraystretch{1.25}
\adjustbox{width=\textwidth, center}{
\begin{tabular}{
    m{0.28\textwidth}<{\raggedright}
    m{0.15\textwidth}<{\centering}
    m{0.15\textwidth}<{\centering}
    m{0.15\textwidth}<{\centering}
    m{0.15\textwidth}<{\centering}
}
\toprule
                                             & $N$-body System           & Protein Dynamics       & Water-3D                    & Fluid113K             \\
\midrule
    Number of Samples (Train / Valid / Test) & $5,000 / 2,000 / 2,000$   & $2,481 / 827 / 863$    & $15,000 / 1,500 / 1,500$    & $1,600 / 320 / 320$   \\
    Number of Nodes ($N$)                    & $100$                     & $855$                  & $7,806$ (Avg)               & $113,140$ (Avg)       \\
    Number of Edges ($|\gE|$)                & $9,900$                   & $55,107$ (Avg)         & $94,931$ (Avg)              & $1,706,973$ (Avg)     \\
    Default Radius ($r$)                     & $\infty$                  & $10$\AA                & $0.035$                     & $0.075$               \\
    Predict Time Interval ($\Delta t$)       & $10$                      & $15$                   & $15$                        & $20$                  \\
\bottomrule
\end{tabular}
}
\end{table}

\subsubsection{Implementation Details}\label{sec:implementation_details}
\begin{table}[H]
\centering
\caption{Default hyper-parameters on four datasets}
\label{tab:hyper-parameter}
\renewcommand\arraystretch{1.25}
\adjustbox{width=\textwidth, center, padding = 2pt}{
\begin{tabular}{
    m{0.28\textwidth}<{\raggedright}
    m{0.15\textwidth}<{\centering}
    m{0.15\textwidth}<{\centering}
    m{0.15\textwidth}<{\centering}
    m{0.15\textwidth}<{\centering}
}
\toprule
    Hyperparameter                                 & $N$-body System   & Protein Dynamics   & Water-3D         & Fluid-113K          \\
\midrule 
    Bandwidth $\sigma $ in \cref{eq:mmd}           & $1.5$             & $1.0$              & $1.5$            & $3.0$                 \\
    Balancing factor $\lambda$ in \cref{eq:loss}   & $0.03$            & $0.50$             & $0.01$           & $0.01$              \\
    Number of Sampled Nodes $N$ in \cref{eq:mmd} & $3$               & $3$                & $3$              & $50$                \\
    Number of Virtual Channel $C$                  & $\{1, 3, 10\}$    & $\{1, 3, 10\}$     & $\{1, 3, 10\}$   & $\{5\}$            \\
\midrule
    Learning Rate                                  & $5\text{e-}4$     & $5\text{e-}4$      & $5\text{e-}4$    & $5\text{e-}4$       \\
    Weight Decay                                   & $1\text{e-}12$    & $1\text{e-}12$     & $1\text{e-}12$   & $1\text{e-}12$      \\
    Epochs                                         & $2500$            & $2000$             & $2000$           & $2500$              \\
    Early Stop                                     & $200$             & $100$              & $100$            & $100$               \\
\midrule
    Number of Layers                               & $4$               & $4$                & $4$              & $4$                 \\
    Hidden Dimension                               & $64$              & $64$               & $64$             & $64$                \\
\bottomrule
\end{tabular}
}
\end{table}

\end{document}